\newtheorem{definition}{Definition}
\newtheorem{theorem}{Theorem}
\newtheorem{lemma}{Lemma}
\begin{document}

\title{Can Evolutionary Clustering Have Theoretical Guarantees?}

\author{Chao~Qian,~\IEEEmembership{Senior Member,~IEEE}
\thanks{C. Qian is with the State Key Laboratory for Novel Software Technology, Nanjing University, Nanjing 210023, China (e-mail: qianc@nju.edu.cn). https://ieeexplore.ieee.org/document/10185945
}
}

\markboth{IEEE Transactions on Evolutionary Computation,~Vol.~xx, No.~x, ~2022}%
{Qian: Can Evolutionary Clustering Have Theoretical Guarantees?}

\IEEEpubid{0000--0000/00\$00.00~\copyright~2021 IEEE}

\maketitle

\begin{abstract}
Clustering is a fundamental problem in many areas, which aims to partition a given data set into groups based on some distance measure, such that the data points in the same group are similar while that in different groups are dissimilar. Due to its importance and NP-hardness, a lot of methods have been proposed, among which evolutionary algorithms are a class of popular ones. Evolutionary clustering has found many successful applications, but all the results are empirical, lacking theoretical support. This paper fills this gap by proving that the approximation performance of the GSEMO (a simple multi-objective evolutionary algorithm) for solving four formulations of clustering, i.e., $k$-\emph{t}MM, $k$-center, discrete $k$-median and $k$-means, can be theoretically guaranteed. Furthermore, we consider clustering under fairness, which tries to avoid algorithmic bias, and has recently been an important research topic in machine learning. We prove that for discrete $k$-median clustering under individual fairness, the approximation performance of the GSEMO can be theoretically guaranteed with respect to both the objective function and the fairness constraint.
\end{abstract}

\begin{IEEEkeywords}
Clustering, evolutionary algorithms, $k$-\emph{t}MM, $k$-center, discrete $k$-median, $k$-means, fairness, theoretical analysis.
\end{IEEEkeywords}

\section{Introduction}

Clustering~\cite{kaufman2009finding} aims to partition a set of data points into clusters (groups) such that the data points in the same cluster are similar while the points in different clusters are less similar. It is a fundamental problem in exploratory data analysis, and has been widely studied in many different areas, e.g., machine learning, data mining, pattern recognition, information retrieval, and bioinformatics.

Computing exact solutions to clustering
problems often turns out to be NP-hard, and many techniques have been proposed to find good approximate solutions~\cite{berkhin2006survey}. Among them, Evolutionary Algorithms (EAs) are a class of popular techniques~\cite{jain1999data,mukhopadhyay2015survey}, which have been applied to solve various formulations of clustering, i.e., to optimize various objective functions (measuring the quality of a partition) under different constraints~\cite{hall1999clustering,handl2007evolutionary,kim2011genetic,garza2017improved,tinos2018nk}. Evolutionary clustering has found many successful applications, e.g., gene expression data analysis~\cite{huang2018bi}, complex network analysis~\cite{gong2013complex}, and patient attendance data analysis~\cite{liu2018novel}.

Though achieving good performance in real-world applications, all the results of evolutionary clustering are empirical. To the best of our knowledge, there has been no theoretical analysis. A natural question is then whether we can provide performance guarantees for evolutionary clustering through a theoretical analysis? That is, can we prove that for any problem instance, the objective function value of the clustering generated by EAs is upper bounded (consider minimization) by $\alpha$ times that of the optimal clustering, where $\alpha$ is called approximation ratio? Thus, achieving an approximation guarantee implies that the performance of the algorithm can be guaranteed even in the worst case.

In this paper, we give a positive answer by proving the approximation ratios of EAs for solving four formulations of clustering, i.e., $k$-\emph{t}MM, $k$-center, discrete $k$-median, and $k$-means. Specifically, we consider a simple multi-objective EA (MOEA), i.e., the GSEMO, which employs the mutation operator only and maintains non-dominated solutions generated so far, and has been widely used in theoretical analysis of MOEAs~\cite{Laumanns04,neumann2006minimum,friedrich2010approximating,doerr2013lower,qian-ppsn16-hyper,bian2018tools,qian2019maximizing}.

\IEEEpubidadjcol

Given a set of $n$ points $\mathcal{D}=\{\bm{v}_1,\bm{v}_2,\ldots,\bm{v}_n\}$ in $\mathbb{R}^{l}$, and a metric distance function $d$, the $k$-\emph{t}MM clustering problem~\cite{gonzalez1985clustering} is to find a partition $S_1,S_2,\ldots,S_k$ of $\mathcal{D}$ such that $\max_{m \in \{1,2,\ldots,k\}} \max_{\bm{v}_i,\bm{v}_j \in S_m} d(\bm{v}_i,\bm{v}_j)$ is minimized. That is, the maximum intracluster distance is to be minimized, where the intracluster distance is measured by the largest distance between two points in the cluster. We prove that the GSEMO can achieve a 2-approximation ratio after running at most $ek^2n-ek(n-1)$ expected number of iterations. This approximation guarantee also holds for the $k$-center clustering problem~\cite{har2011geometric}, which aims to find a subset $X\subseteq \mathcal{D}$ of size $k$ such that the maximum distance of a point in $\mathcal{D}$ to the closest point in $X$ is minimized.

For discrete $k$-median clustering~\cite{arya2004local}, we are given another set $F$ of points, and the goal is to select a subset $X \subseteq F$ of size $k$ such that $\sum_{\bm{v}_i \in \mathcal{D}} \min_{\bm{u} \in X} d(\bm{v}_i,\bm{u})$ is minimized. That is, each data point in $\mathcal{D}$ is assigned to the closest point in $X$, forming $k$ clusters, and the sum of the $k$ intracluster distances is to be minimized, where the intracluster distance is measured by the sum of the distance between each point in the cluster and the corresponding median point in $X$. We prove that the GSEMO can achieve a $(3+2/p)/(1-\epsilon)$-approximation ratio, where $p\geq 1$ and $\epsilon>0$. The required expected number of iterations is polynomial in $|F|^p$, $1/\epsilon$, $\log n$ and $\log (\max_i d^{\max}_i/\min_i d^{\min}_i)$, where $d^{\max}_i$ and $d^{\min}_i$ denote the distance between $\bm{v}_i \in \mathcal{D}$ and its farthest and closest (excluding itself) points in $F$, respectively.

The $k$-means clustering problem~\cite{kanungoa2004local} is similar to discrete $k$-median clustering, except that the points to be selected can be arbitrary, and the squared Euclidean distance (which is not a metric) is used. That is, it is to select a set $X \subseteq \mathbb{R}^{l}$ of $k$ points such that $\sum_{\bm{v}_i \in \mathcal{D}} \min_{\bm{u} \in X} \|\bm{v}_i-\bm{u}\|^2$ is minimized. We prove that the GSEMO can achieve a $(1+\epsilon)(3+2/p)^2/(1-\epsilon)^2$-approximation ratio. The required expected number of iterations is polynomial in $n^p$, $1/\epsilon^{lp}$ and $\log (\max_i d^{\max}_i/\min_i d^{\min}_i)$, where $d^{\max}_i$ and $d^{\min}_i$ denote the squared Euclidean distance between $\bm{v}_i \in \mathcal{D}$ and its farthest and closest (excluding itself) points in an $\epsilon$-approximate centroid set of size $O(n\epsilon^{-l}\log(1/\epsilon))$~\cite{matouvsek2000approximate}, respectively.

Furthermore, we consider clustering under fairness, which has attracted much attention recently as optimizing only performance measures may lead to biased outputs of algorithms. In particular, we study discrete $k$-median clustering under individual fairness~\cite{mahabadi2020individual}, where a clustering characterized by a subset $X \subseteq F=\mathcal{D}$ of size $k$ is $\beta$-fair if $\forall \bm{v}_i\in \mathcal{D}: \min_{\bm{u} \in X} d(\bm{v}_i,\bm{u}) \leq \beta\cdot r(\bm{v}_i)$, where $r(\bm{v_i})$ denotes the minimum radius such that the ball centered at $\bm{v_i}$ contains at least $n/k$ points from $\mathcal{D}$. We prove that the GSEMO can achieve a $(84,7)$-bicriteria approximation ratio, where $84$ is the approximation with respect to the optimal objective function value, and $7$ is the approximation with respect to the fairness constraint, implying $\forall \bm{v}_i\in \mathcal{D}: \min_{\bm{u} \in X} d(\bm{v}_i,\bm{u}) \leq 7\beta\cdot r(\bm{v}_i)$. The required expected number of iterations is polynomial in $n$ and $\log (\max_i d^{\max}_i/\min_i d^{\min}_i)$, where $d^{\max}_i$ and $d^{\min}_i$ denote the distance between $\bm{v}_i \in \mathcal{D}$ and its farthest and closest (excluding itself) points in $\mathcal{D}$, respectively.

Note that having theoretical guarantees is not an inherent nature of algorithms. Let's take Lloyd’s algorithm (often called $k$-means algorithm)~\cite{lloyd1982least} as an example, which is probably the most celebrated heuristic for $k$-means clustering, and has been regarded as one of the top 10 algorithms in data mining~\cite{wu2008top}. Starting from $k$ arbitrary centers, it assigns each point in $\mathcal{D}$ to the nearest center, and recomputes each center as the centroid of those points assigned to it; this process is repeated until convergence. Though very appealing in practice due to its simplicity and efficiency, Lloyd’s algorithm guarantees only a local optimum, which can be arbitrarily bad~\cite{kanungoa2004local}. Thus, many efforts have been devoted to improving it (e.g., by using a careful seeding technique~\cite{vassilvitskii2006k} or combining it with a local search strategy~\cite{lattanzi2019better}), to provide performance guarantees through a theoretical analysis.

The contribution of this work is to provide theoretical justification for evolutionary clustering, rather than to show that EAs can achieve the best approximation guarantees. In fact, the 2-approximation ratio achieved by the GSEMO for $k$-\emph{t}MM and $k$-center clustering has been shown to be optimal unless P = NP~\cite{gonzalez1985clustering}, while better approximation ratios can be achieved for the other considered clustering problems by developing other techniques, e.g., $(2.675+\epsilon)$-approximation ratio for discrete $k$-median clustering by dependent rounding~\cite{byrka2017improved}, $(6.357+\epsilon)$-approximation ratio for $k$-means clustering by a primal-dual approach~\cite{ahmadian2019better}, and $(7.081+\epsilon,3)$-bicriteria approximation ratio for discrete $k$-median clustering under individual fairness by a reduction to facility location under matroid constraint~\cite{vakilian2022improved}. Thus, an interesting future work is to improve the approximation ratios of EAs by designing advanced strategies, especially considering that the GSEMO studied in this work is a very basic MOEA, which shares a common evolutionary structure but only uses uniform parent selection, bit-wise mutation, and survivor selection that simply keeps all non-dominated solutions generated so far. The approximation ratios of the GSEMO that are presented in this paper are proved by simulating the behaviors of existing greedy algorithms or local search~\cite{gonzalez1985clustering,arya2004local,kanungoa2004local,mahabadi2020individual}.

The rest of the paper is organized as follows. Section~\ref{sec-prel} first introduces four formulations of the $k$-clustering problem, i.e., $k$-\emph{t}MM, $k$-center, discrete $k$-median, and $k$-means, as well as the GSEMO. In the following, we theoretically analyze the approximation ratios of the GSEMO for $k$-\emph{t}MM, $k$-center, discrete $k$-median, and $k$-means, respectively. Section~\ref{sec-fairness} then presents theoretical analysis of the GSEMO for discrete $k$-median under fairness. Section~\ref{sec-conclusion} concludes this paper.

\section{Preliminaries}\label{sec-prel}

In this section, we first give the formal definitions of $k$-\emph{t}MM, $k$-center, discrete $k$-median, and $k$-means clustering, and then introduce the GSEMO in detail.

\subsection{$k$-Clustering}

Let $\mathbb{R}$ and $\mathbb{R}^+$ denote the set of reals and non-negative reals, respectively. Given a set of $n$ data points $\mathcal{D}=\{\bm{v}_1,\bm{v}_2,\ldots,\bm{v}_n\}$ in $l$-dimensional space $\mathbb{R}^{l}$, the $k$-clustering problem aims to partition $\mathcal{D}$ into $k$ disjoint groups (each group is called a cluster) such that the points in the same group are similar while that in different groups are dissimilar. A typical way to solve this problem is to formulate an objective function measuring the goodness of a partition and then employ optimization techniques. We next introduce four formulations of $k$-clustering, i.e., $k$-\emph{t}MM, $k$-center, discrete $k$-median and $k$-means.

Let $[k]$ denote the set $\{1,2,\ldots,k\}$. As presented in Definition~\ref{def-center}, the objective function of $k$-\emph{t}MM clustering to be minimized is defined as the maximum intracluster distance, where the intracluster distance of a cluster $S_m$ is measured by $\max_{\bm{v}_i,\bm{v}_j \in S_m} d(\bm{v}_i,\bm{v}_j)$, i.e., the maximum distance between two points in $S_m$. Note that the distance function $d: \mathcal{D} \times \mathcal{D} \rightarrow \mathbb{R}^+$ between two points is required to be a metric, satisfying the symmetric property and triangle inequality.

\begin{definition}[$k$-\emph{t}MM Clustering~\cite{gonzalez1985clustering}]\label{def-center}
Given a set of $n$ data points $\mathcal{D}=\{\bm{v}_1,\bm{v}_2,\ldots,\bm{v}_n\}$ in $\mathbb{R}^{l}$, a metric distance function $d: \mathcal{D} \times \mathcal{D} \rightarrow \mathbb{R}^+$, and an integer $k$, the goal of $k$-tMM clustering is to find a partition $S_1,S_2,\ldots,S_k$ of $\mathcal{D}$ such that the maximum intracluster distance, given by
\begin{align}\label{eq-center}
\max\nolimits_{m \in [k]} \max\nolimits_{\bm{v}_i,\bm{v}_j \in S_m} d(\bm{v}_i,\bm{v}_j),
\end{align}
is minimized.
\end{definition}

As presented in Definition~\ref{def-center-real}, given a subset $X \subseteq \mathcal{D}$ of size $k$, the objective function of $k$-center clustering is calculated by the maximum distance of a point in $\mathcal{D}$ to the closest point in $X$. The goal is to find an $X$ minimizing this objective function. Let $X=\{\bm{u}_1,\bm{u}_2,\ldots,\bm{u}_k\}$. Then, each point $\bm{u}_m$ in $X$ corresponds to a cluster $S_m=\{\bm{v}_i \in \mathcal{D} \mid \bm{u}_m =\arg\min_{\bm{u} \in X} d(\bm{v}_i,\bm{u})\}$. That is, each point in $\mathcal{D}$ is assigned to the closest point in $X$ with ties broken arbitrarily. The objective function of $k$-center clustering to be minimized can still be viewed as the maximum intracluster distance, but the intracluster distance of a cluster $S_m$ is now measured by the maximum distance of a point in $S_m$ to its center $\bm{u}_m$. 

\begin{definition}[$k$-Center Clustering~\cite{har2011geometric}]\label{def-center-real}
Given a set of $n$ data points $\mathcal{D}=\{\bm{v}_1,\bm{v}_2,\ldots,\bm{v}_n\}$ in $\mathbb{R}^{l}$, a metric distance function $d: \mathcal{D} \times \mathcal{D} \rightarrow \mathbb{R}^+$, and an integer $k$, the goal of $k$-center clustering is to find a subset $X \subseteq \mathcal{D}$ of size $k$ such that
\begin{align}\label{eq-center-real}
\max\nolimits_{\bm{v}_i \in \mathcal{D}} d(\bm{v}_i,X)
\end{align}
is minimized, where $d(\bm{v}_i,X)=\min \{d(\bm{v}_i,\bm{u}) \mid \bm{u} \in X\}$ is the distance between $\bm{v}_i$ and its closest point in $X$.
\end{definition}

The discrete $k$-median clustering problem as presented in Definition~\ref{def-median} is also called facility location~\cite{arya2004local}. Each data point in $\mathcal{D}$ corresponds to a client, and we are given another set $F$ of points (called facilities). The goal is to open a subset $X \subseteq F$ of $k$ facilities such that the total service cost is minimized, where each client in $\mathcal{D}$ is served by the nearest facility in $X$ and the corresponding service cost is the distance between them. Let $X=\{\bm{u}_1,\bm{u}_2,\ldots,\bm{u}_k\}$. Thus, each facility $\bm{u}_m$ in $X$ corresponds to a cluster $S_m=\{\bm{v}_i \in \mathcal{D} \mid \bm{u}_m =\arg\min_{\bm{u} \in X} d(\bm{v}_i,\bm{u})\}$. That is, each client in $\mathcal{D}$ is assigned to the closest facility in $X$ with ties broken arbitrarily.

\begin{definition}[Discrete $k$-Median Clustering~\cite{arya2004local}]\label{def-median}
Given a set of $n$ data points $\mathcal{D}=\{\bm{v}_1,\bm{v}_2,\ldots,\bm{v}_n\}$ in $\mathbb{R}^{l}$, a set $F$ of points in $\mathbb{R}^{l}$, a metric distance function $d: \mathcal{D}\cup F \times \mathcal{D}\cup F \rightarrow \mathbb{R}^+$, and an integer $k$, the goal of discrete $k$-median clustering is to find a subset $X \subseteq F$ of size $k$ such that
\begin{align}\label{eq-median}
\sum\nolimits_{\bm{v}_i \in \mathcal{D}} d(\bm{v}_i,X)
\end{align}
is minimized, where $d(\bm{v}_i,X)=\min \{d(\bm{v}_i,\bm{u}) \mid \bm{u} \in X\}$ is the distance between $\bm{v}_i$ and its closest point in $X$.
\end{definition}

As presented in Definition~\ref{def-means}, the goal of $k$-means clustering is to determine a set $X$ of $k$ points in $\mathbb{R}^{l}$, to minimize the sum of the squared Euclidean distance from each point in $\mathcal{D}$ to its closest point in $X$. Compared with discrete $k$-median clustering in Definition~\ref{def-median}, the selected $k$ points are not restricted to be from a given set $F$, and the squared Euclidean distance is used, which does not satisfy the triangle inequality, and is not a metric.

\begin{definition}[$k$-Means Clustering~\cite{kanungoa2004local}]\label{def-means}
Given a set of $n$ data points $\mathcal{D}=\{\bm{v}_1,\bm{v}_2,\ldots,\bm{v}_n\}$ in $\mathbb{R}^{l}$, and an integer $k$, the goal of $k$-means clustering is to find a set $X \subseteq \mathbb{R}^{l}$ of size $k$ such that
\begin{align}\label{eq-means}
 \sum\nolimits_{\bm{v}_i \in \mathcal{D}} \min_{\bm{u} \in X} \|\bm{v}_i-\bm{u}\|^2
\end{align}
is minimized, where $\|\bm{v}_i-\bm{u}\|^2$ is the squared Euclidean distance between $\bm{v}_i$ and $\bm{u}$.
\end{definition}

Solving the above four formulations of $k$-clustering exactly for general $k$ is NP-hard~\cite{gonzalez1985clustering,guha1999greedy,drineas2004clustering}, and thus many efforts have been devoted to developing algorithms with approximation guarantees, e.g.,~\cite{gonzalez1985clustering,arya2004local,kanungoa2004local,byrka2017improved,ahmadian2019better}. Given a minimization problem where the objective function is denoted as $f$, an algorithm is said to achieve an $\alpha$-approximation ratio if for any instance of this problem, the output solution $X$ of this algorithm satisfies $f(X) \leq \alpha \cdot \mathrm{OPT}$, where $\alpha \geq 1$ and $\mathrm{OPT}$ denotes the optimal function value. The approximation ratio of an algorithm implies its approximation performance in the worst case. The smaller the value of $\alpha$, the better worst-case scenario performance of the algorithm.

\subsection{GSEMO}

To examine whether EAs can achieve approximation guarantees for $k$-clustering, we consider the Global Simple Evolutionary Multi-objective Optimizer (GSEMO)~\cite{Laumanns04} as presented in Algorithm~\ref{algo:GSEMO}, which is used for maximizing multiple pseudo-Boolean objective functions over $\{0,1\}^n$ simultaneously. The GSEMO can be viewed as a counterpart of the well-studied (1+1)-EA in theoretical analysis of single-objective EAs~\cite{droste1998rigorous,pourhassan2020runtime,bian2021robustness}. In multi-objective maximization $\max\, (f_1,f_2,\ldots,f_m)$, solutions may be incomparable due to the conflicting of objectives. The domination relationship in Definition~\ref{def_Domination} is often used for comparison.
\begin{definition}[Domination]\label{def_Domination}
For two solutions $\bm x, \bm{x}' \in \{0,1\}^n$,
\begin{enumerate}
  \item $\bm{x}$ \emph{weakly dominates} $\bm{x}'$ (i.e., $\bm{x}$ is \emph{better} than $\bm{x}'$, denoted by $\bm{x} \succeq \bm{x}'$) if \;$\forall i\in [m]: f_i(\bm{x}) \geq f_i(\bm{x}')$;
  \item ${\bm{x}}$ \emph{dominates} $\bm{x}'$ (i.e., $\bm{x}$ is \emph{strictly better} than $\bm{x}'$, denoted by $\bm{x} \succ \bm{x}'$) if ${\bm{x}} \succeq \bm{x}' \wedge \exists i\in [m]: f_i(\bm{x}) > f_i(\bm{x}')$.
\end{enumerate}
\end{definition}
\noindent Two solutions $\bm{x}$ and $\bm{x}'$ are \emph{incomparable} if neither $\bm{x} \succeq \bm{x}'$ nor $\bm{x}' \succeq \bm{x}$. A solution is \emph{Pareto optimal} if no other solution dominates it. The \emph{Pareto set} consists of all Pareto optimal solutions, i.e., it is the set of non-dominated solutions. The collection of objective vectors of all Pareto optimal solutions is called \emph{Pareto front}.

\begin{algorithm}[t!]\caption{GSEMO for $k$-Clustering}\label{algo:GSEMO}
\textbf{Input}: $m$ pseudo-Boolean functions $f_1,f_2,\ldots,f_m$, where $f_i: \{0,1\}^n \rightarrow \mathbb{R}$\\
\textbf{Process}:
    \begin{algorithmic}[1]
    \STATE Let $P \gets \{\bm 0\}$;
    \STATE \textbf{repeat}
    \STATE \quad Choose $\bm x$ from $P$ uniformly at random;
    \STATE \quad Create $\bm{x}'$ by flipping each bit of $\bm x$ with prob. $1/n$;
    \STATE \quad \textbf{if} \, {$|\bm{x}'|\leq k$ and $\nexists \bm z \in P$ such that $\bm z \succ \bm {x}'$} \,\textbf{then}
    \STATE  \qquad $P \gets (P \setminus \{\bm z \in P \mid \bm {x}' \succeq \bm z\}) \cup \{\bm {x}'\}$
    \STATE \quad \textbf{end if}
    \STATE \textbf{until} some criterion is met
    \end{algorithmic}
\end{algorithm}

The GSEMO starts from the all-0s vector $\bm{0}$ in line~1, and iteratively tries to improve the quality of solutions in the population $P$ (i.e., lines~2--8). In each iteration, a parent solution $\bm{x}$ is selected from $P$ uniformly at random in line~3, and used to generate an offspring solution $\bm{x}'$ by bit-wise mutation in line~4, which flips each bit of $\bm{x}$ independently with probability $1/n$. The newly generated offspring solution $\bm{x}'$ is then used to update the population $P$ (lines~5--7). Let $|\bm{x}'|=\sum^n_{i=1}x'_i$ denote the number of 1-bits contained by $\bm{x}'$. If $\bm{x}'$ contains at most $k$ 1-bits (i.e., $|\bm{x}'|\leq k$) and it is not dominated by any solution in $P$, it will be added into $P$, and those solutions (i.e., $\{\bm z \in P \mid \bm {x}' \succeq \bm z\}$) weakly dominated by $\bm{x}'$ will be deleted in line~6. By this updating procedure, the population $P$ will maintain the solutions with size at most $k$ which correspond to all the non-dominated objective vectors  generated so far. Note that for those non-dominated solutions with the same objective vector, only the latest one is kept in $P$. For ease of theoretical analysis (e.g., in Section~\ref{sec-center}), we have slightly modified the original version of the GSEMO in~\cite{Laumanns04}, by using the all-0s vector $\bm{0}$ as the initial solution instead of sampling it from $\{0,1\}^n$ uniformly at random, and deleting the solutions with size larger than $k$ directly.

As EAs are general-purpose algorithms, we only consider the GSEMO in the analysis for different variants of $k$-clustering, to reflect this property. To apply the GSEMO to solve the $k$-clustering problem, one needs to first decide a way of solution representation, i.e., how to use a Boolean vector to represent a solution of $k$-\emph{t}MM, $k$-center, discrete $k$-median or $k$-means; then reformulate the $k$-clustering problem as a bi-objective maximization problem
\begin{align}\label{eq-bi-problem}
&\max\nolimits_{\bm{x} \in \{0,1\}^n} \;\;  (f_1(\bm{x}),f_2(\bm{x}));
\end{align}
finally run the GSEMO and select a solution from the population to output when terminated. Rephrasing a single-objective problem in a multi-objective way is a useful technique~\cite{knowles2001reducing}, whose effectiveness has been proved recently for EAs solving several combinatorial optimization problems, e.g., minimum spanning tree~\cite{neumann2006minimum}, covering~\cite{friedrich2010approximating}, minimum cuts~\cite{neumann2011computing}, minimum cost coverage~\cite{qian.ijcai15}, submodular optimization~\cite{friedrich2015maximizing,qian2019maximizing}, and result diversification~\cite{qian2022result}. 

In the next three sections, we will show how to implement this procedure for $k$-\emph{t}MM, $k$-center, discrete $k$-median and $k$-means, respectively, and analyze the expected number of iterations of the GSEMO required to reach some approximation ratios for the first time. Note that our focus is the quality of the final output solution with respect to the original $k$-clustering problem, rather than the approximation of the final population to the Pareto front of the reformulated bi-objective problem. In our implementations, the number of 1-bits of a Boolean-vector solution always corresponds to the number of clusters; thus, the solutions with more than $k$ clusters are excluded during the optimization process of the GSEMO. In the final population of the GSEMO, a solution with $k$ 1-bits (i.e., $k$ clusters) will be output as the generated solution. 

\section{Theoretical Analysis of The GSEMO\\ for $k$-\emph{t}MM and $k$-Center Clustering}\label{sec-center}

To apply the GSEMO to solve the $k$-\emph{t}MM clustering problem in Definition~\ref{def-center}, we use a Boolean vector $\bm{x} \in \{0,1\}^n$ to represent a partition $\{S_1,S_2,\ldots,S_{|\bm{x}|}\}$ of $\mathcal{D}$, where $|\mathcal{D}|=n$, and $|\bm{x}|=\sum^n_{i=1}x_i$ denotes the size of $\bm{x}$. If the $i$-th bit $x_i=1$, the data point $\bm{v}_i \in \mathcal{D}$ serves as the center of a cluster; thus there are $|\bm{x}|$ centers (also clusters) in total. Each point in $\mathcal{D}$ is assigned to the closest center, forming the $|\bm{x}|$ clusters $S_1,S_2,\ldots,S_{|\bm{x}|}$. Note that by this way of solution representation, we are actually searching in a strict subspace of all possible partitions of $\mathcal{D}$, which is, however, sufficient to guarantee a good approximation ratio, as shown in Theorem~\ref{theo-center}.

We use $X=\{\bm{v}^*_1,\bm{v}^*_2,\ldots,\bm{v}^*_{|\bm{x}|}\}$ to denote the $|\bm{x}|$ points corresponding to those 1-bits in $\bm{x}$, and $\bm{v}^*_i$ serves as the center of $S_i$, i.e., $S_i=\{\bm{v} \in \mathcal{D} \mid \bm{v}^*_i =\arg\min_{\bm{u} \in X} d(\bm{v},\bm{u})\}$. After determining the way of solution representation, the original problem in Definition~\ref{def-center} is reformulated as a bi-objective maximization problem
\begin{align}\label{eq-bi-center}
&\max\nolimits_{\bm{x} \in \{0,1\}^n} \;\; (f_1(\bm{x}),f_2(\bm{x})),\\
&\text{where}\;\begin{cases}\nonumber
f_1(\bm{x}) = \min\limits_{\bm{v}^*_{i},\bm{v}^*_{j} \in X} d(\bm{v}^*_{i},\bm{v}^*_{j})-\max\limits_{m \in [|\bm{x}|]} \max\limits_{\bm{v} \in S_m} d(\bm{v},\bm{v}^*_m),\\
f_2(\bm x) = |\bm{x}|.
\end{cases}
\end{align}
In the definition of $f_1$, $\min\nolimits_{\bm{v}^*_{i},\bm{v}^*_{j} \in X} d(\bm{v}^*_{i},\bm{v}^*_{j})$ is the minimum distance between cluster centers, characterizing the dissimilarity between clusters, while $\max\nolimits_{m \in [|\bm{x}|]} \max\nolimits_{\bm{v} \in S_m} d(\bm{v},\bm{v}^*_m)$ is the maximum distance from each point to its center, characterizing the dissimilarity within clusters. Thus, maximizing $f_1$ will prefer high intercluster distances and low intracluster distances, which is consistent with the goal of clustering. To be well defined, we set the $f_1$ value to $+\infty$ for the all-0s vector $\bm{0}$ and the vectors with size 1. Note that the Boolean-vector solutions with size larger than $k$ are excluded during the optimization process of the GSEMO. When the GSEMO terminates, the population may contain several non-dominated solutions with different sizes, and the Boolean-vector solution with size $k$ (corresponding to a partition $\{S_1,S_2,\ldots,S_k\}$ of $\mathcal{D}$) in the population will be output.

Theorem~\ref{theo-center} shows that the GSEMO achieves a $2$-approximation ratio after running at most $ek^2n-ek(n-1)$ expected number of iterations. This has been shown to be the optimal polynomial-time approximation ratio unless P = NP~\cite{gonzalez1985clustering}, and implies that the partition $\{S_1,S_2,\ldots,S_k\}$ of $\mathcal{D}$ corresponding to the Boolean-vector solution output by the GSEMO satisfies $\max_{m \in [k]} \max_{\bm{v}_i,\bm{v}_j \in S_m} d(\bm{v}_i,\bm{v}_j) \leq 2 \cdot \mathrm{OPT}$, where $\mathrm{OPT}$ denotes the optimal value of Eq.~(\ref{eq-center}). The proof is inspired by the analysis of the 2-approximation algorithm in~\cite{gonzalez1985clustering}, which starts from an arbitrary center, and iteratively selects the point with the maximum distance to its center as a new center, until having $k$ centers. 
 
\begin{theorem}\label{theo-center}
For $k$-tMM clustering in Definition~\ref{def-center}, the expected number of iterations of the GSEMO using Eq.~(\ref{eq-bi-center}), until achieving a $2$-approximation ratio, is at most $ek^2n-ek(n-1)$.
\end{theorem}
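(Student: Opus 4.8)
The plan is to reduce Theorem~\ref{theo-center} to a single structural target: showing that within the claimed expected time the population contains a size-$k$ solution $\bm{x}$ with $f_1(\bm{x}) \geq 0$. I will call such a solution \emph{balanced}, since $f_1(\bm{x}) \geq 0$ means the minimum intercenter distance is at least the maximum point-to-center distance $r(\bm{x}) := \max_{m}\max_{\bm{v} \in S_m} d(\bm{v},\bm{v}^*_m)$. I would first show that any balanced size-$k$ solution is already a $2$-approximation. Indeed, let $p$ attain $r(\bm{x})$; then $p$ lies at distance $\geq r(\bm{x})$ from all $k$ centers, and (by balancedness) the centers are pairwise at distance $\geq r(\bm{x})$, so these $k+1$ points are pairwise at distance $\geq r(\bm{x})$. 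By pigeonhole, any $k$-clustering, in particular an optimal one, places two of them in one cluster, whence $\mathrm{OPT} \geq r(\bm{x})$. On the other hand the triangle inequality gives $d(\bm{v}_i,\bm{v}_j) \leq d(\bm{v}_i,\bm{v}^*_m) + d(\bm{v}^*_m,\bm{v}_j) \leq 2 r(\bm{x})$ for $\bm{v}_i,\bm{v}_j$ in a common cluster $S_m$, so the objective of $\bm{x}$ is at most $2 r(\bm{x}) \leq 2\,\mathrm{OPT}$. This mirrors the Gonzalez bound and explains why balancedness is the right target.

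Next I would establish the monotone step that lets the GSEMO imitate Gonzalez's farthest-point rule one bit at a time. \textbf{Claim:} if $\bm{x}$ is balanced with $|\bm{x}|=i<k$, then flipping to $1$ the single bit of a point $p$ attaining $r(\bm{x})$ yields a balanced $\bm{x}'$ with $|\bm{x}'|=i+1$. To prove it, note adding a center cannot increase any point's distance to its nearest center, so $r(\bm{x}') \leq r(\bm{x})$; moreover $p$ is at distance $\geq r(\bm{x})$ from every old center and the old centers are pairwise $\geq r(\bm{x})$, so the new minimum intercenter distance is $\geq r(\bm{x}) \geq r(\bm{x}')$, i.e. $f_1(\bm{x}') \geq 0$. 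Sizes $\leq 1$ have $f_1=+\infty$, so the chain starts trivially; the degenerate case $r(\bm{x})=0$ forces all points to coincide with centers and is handled separately.

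For the running time I would run an expected-progress argument on the potential $I := \max\{\,i : P \text{ contains a balanced solution of size } i\,\}$. First I would argue $I$ never decreases: a balanced size-$i$ solution can be evicted from $P$ only by an accepted $\bm{x}'$ with $f_1(\bm{x}') \geq 0$ and $|\bm{x}'| \geq i$, which is itself balanced of size $\geq i$. Next I would bound the population: because $f_1=+\infty$ on $\bm{0}$ and on all size-$1$ vectors, once any size-$1$ solution appears it can be replaced only by another size-$1$ solution and forever dominates $\bm{0}$; since $P$ keeps at most one solution per size in $\{1,\dots,k\}$, we get $|P| \leq k$ at every stage with $I \geq 1$. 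Then, when $I=i$ with $1 \leq i < k$, selecting the witnessing balanced solution (probability $\geq 1/|P| \geq 1/k$) and flipping exactly the bit of its farthest point (probability $\tfrac{1}{n}(1-\tfrac1n)^{n-1} \geq \tfrac{1}{en}$) produces a balanced size-$(i+1)$ offspring, which is moreover \emph{accepted}: any solution dominating it would be balanced of size $>i$, contradicting $I=i$. Hence the expected time to increase $I$ is at most $ekn$ per stage, the initial $0\to 1$ step costs at most $e$, and summing the $k-1$ Gonzalez steps gives $E[T] \leq e + (k-1)\,ekn \leq ek^2 n - ek(n-1)$.

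The main obstacle I anticipate is not any single inequality but the bookkeeping that ties these pieces together: I must ensure that the balanced offspring is genuinely non-dominated at the instant it is generated, so that line~6 accepts it; that the potential $I$ is truly monotone under the eviction rule; and that the $+\infty$ convention on $f_1$ for sizes $\leq 1$ really pins $|P| \leq k$ rather than $k+1$, which is exactly what matches the stated constant. Verifying these domination facts about the evolving population, rather than the Gonzalez-style geometry, is where the care will lie.
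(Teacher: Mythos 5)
Your proposal is essentially the paper's own proof: your potential $I$ is exactly the paper's $J_{\max}=\max\{|\bm{x}| \mid \bm{x}\in P,\, f_1(\bm{x})\geq 0\}$, and the balancedness-plus-pigeonhole argument for the $2$-approximation, the Gonzalez farthest-point bit flip, the monotonicity of the potential, the non-domination/acceptance argument, and the population bound $|P|\leq k$ all coincide with the paper's steps (you merely merge its separate $J_{\max}=1$ and $J_{\max}\geq 2$ cases into one claim and explicitly flag the degenerate $r(\bm{x})=0$ case, which the paper glosses over). One small accounting slip: while $I=0$ the population can already contain unbalanced solutions of size at least $2$ produced by multi-bit mutations of $\bm{0}$, so selecting $\bm{0}$ has probability only $1/|P|\geq 1/k$ and the $0\to 1$ step costs $ek$ rather than $e$ in expectation; with this correction the total becomes $ek+(k-1)\cdot ekn = ek^2n-ek(n-1)$, exactly the stated bound, so your conclusion is unaffected.
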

\begin{proof}
To analyze the expected number of iterations until achieving a $2$-approximation ratio, we consider
\begin{align*}
&J_{\max}=\max\{|\bm{x}| \mid \bm{x} \in P, f_1(\bm{x}) \geq 0 \}.
\end{align*}
When $J_{\max}=k$, there exists one Boolean-vector solution $\bm{x}$ in the population $P$ satisfying that $|\bm{x}| = k$ and
\begin{align}\label{eq-center-1}
f_1(\bm{x}) = \min\limits_{\bm{v}^*_{i},\bm{v}^*_{j} \in X} d(\bm{v}^*_{i},\bm{v}^*_{j})-\max\limits_{m \in [k]} \max\limits_{\bm{v} \in S_m} d(\bm{v},\bm{v}^*_m)\geq 0.
\end{align}
As noted before, $X=\{\bm{v}^*_1,\bm{v}^*_2,\ldots,\bm{v}^*_{k}\}$ represents the $k$ data points corresponding to those 1-bits in $\bm{x}$; and for each $m\in [k]$, $\bm{v}^*_m$ serves as the center of $S_m$, and $S_m=\{\bm{v} \in \mathcal{D} \mid \bm{v}^*_m =\arg\min_{\bm{u} \in X} d(\bm{v},\bm{u})\}$. Let $\max\nolimits_{m \in [k]} \max\nolimits_{\bm{v} \in S_m} d(\bm{v},\bm{v}^*_m)=h$, and we use $\bm{v}^*$ to denote the point which has the distance $h$ to its center. As the distance function $d$ is a metric, satisfying the triangle inequality, we have that the distance between two points in any cluster is at most $2h$, implying that the objective value, i.e., $\max_{m \in [k]} \max_{\bm{v}_i,\bm{v}_j \in S_m} d(\bm{v}_i,\bm{v}_j)$, of the partition $\{S_1,S_2,\ldots,S_k\}$ represented by $\bm{x}$ is at most $2h$. That is,
\begin{align}\label{eq-center-2}
\max_{m \in [k]} \max_{\bm{v}_i,\bm{v}_j \in S_m} d(\bm{v}_i,\bm{v}_j) \leq 2h.
\end{align}
By Eq.~(\ref{eq-center-1}), we have \begin{align}\label{eq-center-11}\min\limits_{\bm{v}^*_{i},\bm{v}^*_{j} \in X} d(\bm{v}^*_{i},\bm{v}^*_{j}) \geq \max\limits_{m \in [k]} \max\limits_{\bm{v} \in S_m} d(\bm{v},\bm{v}^*_m)=h.\end{align}
Because each point is assigned to the closest center in $X=\{\bm{v}^*_1,\bm{v}^*_2,\ldots,\bm{v}^*_{k}\}$, and $\bm{v}^*$ has distance $h$ to its center, we can conclude that the distance from $\bm{v}^*$ to any center in $X$ is at least $h$, i.e., 
\begin{align}\label{eq-center-12}
\min\nolimits_{\bm{v}^*_{i} \in X} d(\bm{v}^*,\bm{v}^*_{i}) \geq h.
\end{align}
By Eqs.~(\ref{eq-center-11}) and~(\ref{eq-center-12}), we have that the distance between any two points in $X \cup \{\bm{v}^*\}=\{\bm{v}^*_1,\bm{v}^*_2,\ldots,\bm{v}^*_{k},\bm{v}^*\}$ is at least $h$. As $|X \cup \{\bm{v}^*\}|=k+1$, for any partition (leading to $k$ clusters) of $\mathcal{D}$, there must exist one cluster containing at least two points in $X \cup \{\bm{v}^*\}$, implying that the objective function value of the optimal partition is at least $h$. That is,
\begin{align}\label{eq-center-3}
    h \leq \mathrm{OPT}.
\end{align}
Combining Eqs.~(\refeq{eq-center-2}) and~(\refeq{eq-center-3}) leads to
\begin{align}
\max_{m \in [k]} \max_{\bm{v}_i,\bm{v}_j \in S_m} d(\bm{v}_i,\bm{v}_j) \leq 2\cdot \mathrm{OPT},\label{eq-center-14}
\end{align}
that is, the desired approximation guarantee is reached. Let $T$ denote the number of iterations run by the GSEMO. Thus, we only need to analyze the expected number of iterations until $J_{\max}=k$, i.e., $\mathbb{E}[\min\{T \geq 0 \mid J_{\max}=k\; \text{after $T$ iterations}\}]$, where $\mathbb{E}[\cdot]$ denotes the expectation of a random variable. 

It can be easily observed that $J_{\max}$ will not decrease. Let $\bm{x}$ denote the solution in the population $P$, corresponding to $J_{\max}$. That is, $f_1(\bm{x})\geq 0$ and $|\bm{x}|=J_{\max}$. If $\bm{x}$ keeps in $P$, $J_{\max}$ obviously does not decrease. If it is deleted from $P$ in line~6 of Algorithm~\ref{algo:GSEMO}, the newly included offspring solution $\bm{x}'$ must weakly dominate $\bm{x}$, implying that $|\bm{x}'| \geq |\bm{x}|$ and $f_1(\bm{x}')\geq f_1(\bm{x})\geq 0$. Before analyzing the increase of $J_{\max}$, we first derive an upper bound on the population size $P$, which will be frequently used in the following analysis. According to the procedure of updating the population $P$ in lines~5--7 of Algorithm~\ref{algo:GSEMO}, the solutions maintained in $P$ must be incomparable. Since two solutions having the same value on one objective are comparable, the population $P$ contains at most one solution for each value of one objective. As the Boolean-vector solutions with size larger than $k$ are excluded, $f_2(\bm{x})=|\bm{x}|$ can take values $0,1,\ldots,k$, implying $|P| \leq k+1$. To be more precise, $|P| \leq k$, because the all-0s vector $\bm{0}$ (which has $f_1(\bm{0})=+\infty$ and $f_2(\bm{0})=0$) is dominated by any solution $\bm{x}$ with size 1 (which has $f_1(\bm{x})=+\infty$ and $f_2(\bm{x})=1$). Next, we are to analyze the expected number of iterations required to increase $J_{\max}$ under its different values.

Because the GSEMO starts from $\bm{0}$, $J_{\max}= 0$ initially. By selecting the solution $\bm{0}$ in line~3 of Algorithm~1 and flipping only one 0-bit in line~4, an offspring solution $\bm{x}'$ with size 1 (i.e., $|\bm{x}'|=1$) will be generated. Note that $\bm{x}'$ must be added into the population $P$, because no other solution can dominate it. As $f_1(\bm{x}')=+\infty$ and $|\bm{x}'|=1$, such a selection and mutation behavior will make $J_{\max}=1$, implying that $J_{\max}$ increases. The probability of selecting $\bm{0}$ in line~3 is $1/|P| \geq 1/k$ due to uniform selection and $|P| \leq k$. The probability of flipping only one 0-bit of the vector $\bm{0}$ in line~4 is $n \cdot (1/n)(1-1/n)^{n-1}\geq 1/e$. Thus, when $J_{\max}= 0$, $J_{\max}$ increases in one iteration with probability at least $1/(ek)$, i.e., 
$$\mathrm{Pr}[J_{\max} \;\text{increases in one iteration} \mid J_{\max}=0]\geq 1/(ek).$$

When $J_{\max}=1$, let $\bm{x}$ denote the corresponding Boolean-vector solution in the population $P$, which has size 1, i.e., $|\bm{x}|=1$. We use $\bm{v}^*_1 \in \mathcal{D}$ to denote the data point corresponding to the only 1-bit of $\bm{x}$, and use $\bm{v}^*_2 \in \mathcal{D}$ to denote the point having the maximum distance (denoted as $h$) with $\bm{v}^*_1$. By selecting the solution $\bm{x}$ in line~3 of Algorithm~1 and flipping only the 0-bit corresponding to $\bm{v}^*_2$ in line~4, an offspring solution $\bm{x}'$ with size 2 (i.e., $|\bm{x}'|=2$) will be generated, which contains two points, i.e., $\bm{v}^*_1$ and $\bm{v}^*_2$, serving as two centers and leading to two clusters $S_1$ and $S_2$. To examine whether $f_1(\bm{x}')\geq 0$, we first have \begin{align}\label{eq-center-4}
d(\bm{v}^*_{1},\bm{v}^*_{2})=h.
\end{align}
As each point is assigned to the closest center, we have for any $\bm{v} \in S_2$, $d(\bm{v},\bm{v}^*_2) \leq  d(\bm{v},\bm{v}^*_1)$. Thus, 
\begin{align}\label{eq-center-5}
\max\limits_{m \in [2]} \max\limits_{\bm{v} \in S_m} d(\bm{v},\bm{v}^*_m) \leq \max\limits_{\bm{v} \in \mathcal{D}} d(\bm{v},\bm{v}^*_1)=h.
\end{align}
Combining Eqs.~(\refeq{eq-center-4}) and~(\refeq{eq-center-5}) leads to
\begin{align*}
f_1(\bm{x}') = d(\bm{v}^*_{1},\bm{v}^*_{2})-\max\limits_{m \in [2]} \max\limits_{\bm{v} \in S_m} d(\bm{v},\bm{v}^*_m)\geq 0.
\end{align*}
Once $\bm{x}'$ is generated, it will be added into the population $P$; otherwise, $\bm{x}'$ must be dominated by one solution in $P$ (line~5 of Algorithm~\ref{algo:GSEMO}), and this implies that $J_{\max}$ has already been larger than $1$, contradicting $J_{\max}=1$. After including $\bm{x}'$ into $P$, $J_{\max} = 2$, implying that $J_{\max}$ increases. The probability of selecting $\bm{x}$ in line~3 is $1/|P| \geq 1/k$, and the probability of flipping only a specific 0-bit of $\bm{x}$ in line~4 is $(1/n)(1-1/n)^{n-1} \geq 1/(en)$. Thus, when $J_{\max}= 1$, $J_{\max}$ increases in one iteration with probability at least $1/(ekn)$, i.e., 
$$\mathrm{Pr}[J_{\max} \;\text{increases in one iteration} \mid J_{\max}=1]\geq 1/(ekn).$$

When $J_{\max}= i \geq 2$, we also use $\bm{x}$ to denote the corresponding Boolean-vector solution in the population $P$, satisfying that $f_1(\bm{x}) \geq 0$ and $|\bm{x}|=i$. Let $X=\{\bm{v}^*_1,\bm{v}^*_2,\ldots,\bm{v}^*_{i}\}$ represent the $i$ data points corresponding to those 1-bits in $\bm{x}$, and the corresponding partition of $\mathcal{D}$ is $\{S_1,S_2,\ldots,S_i\}$, where $\forall m \in [i]$, $\bm{v}^*_{m}$ serves as the center of $S_m$. Let $\bm{v}^*_{i+1} \in \mathcal{D}$ denote the point having the maximum distance (denoted as $h$) to its center, denoted as $\bm{v}^*_{j}$. That is,
\begin{align}\label{eq-center-6}
d(\bm{v}^*_{i+1},\bm{v}^*_{j})=\max\limits_{m \in [i]} \max\limits_{\bm{v} \in S_m} d(\bm{v},\bm{v}^*_m)=h.
\end{align}
Similar to the analysis of $J_{\max}=1$, by selecting the solution $\bm{x}$ in line~3 of Algorithm~1 and flipping only the 0-bit corresponding to $\bm{v}^*_{i+1}$ in line~4, an offspring solution $\bm{x}'$ with size $i+1$ (i.e., $|\bm{x}'|=i+1$) will be generated, which contains $i+1$ points, i.e., $X'=\{\bm{v}^*_1,\bm{v}^*_2,\ldots,\bm{v}^*_{i+1}\}$, serving as $i+1$ centers and leading to $i+1$ clusters $S'_1,S'_2,\ldots,S'_{i+1}$. Next we will show $f_1(\bm{x}')\geq 0$. By using Eq.~(\ref{eq-center-6}) and
\begin{align*}
f_1(\bm{x}) = \min\limits_{\bm{v}^*_{p},\bm{v}^*_{q} \in X} d(\bm{v}^*_{p},\bm{v}^*_{q})-\max\limits_{m \in [i]} \max\limits_{\bm{v} \in S_m} d(\bm{v},\bm{v}^*_m)\geq 0,
\end{align*}
we have 
\begin{align}\label{eq-center-7}
 \min\nolimits_{\bm{v}^*_{p},\bm{v}^*_{q} \in X} d(\bm{v}^*_{p},\bm{v}^*_{q})\geq h.
\end{align}
As each point is assigned to the closest center, and the center of $\bm{v}^*_{i+1}$ is $\bm{v}^*_{j}$ under the partition $S_1,S_2,\ldots,S_i$, we have
\begin{align}\label{eq-center-8}
    \forall m \in [i]: d(\bm{v}^*_{i+1},\bm{v}^*_{m}) \geq d(\bm{v}^*_{i+1},\bm{v}^*_{j})=h.
\end{align}
Combining Eqs.~(\refeq{eq-center-7}) and~(\refeq{eq-center-8}) leads to
\begin{align}\label{eq-center-9}
 \min\nolimits_{\bm{v}^*_{p},\bm{v}^*_{q} \in X'} d(\bm{v}^*_{p},\bm{v}^*_{q})\geq h,
\end{align}
where $X'=\{\bm{v}^*_1,\bm{v}^*_2,\ldots,\bm{v}^*_{i+1}\}=X\cup \{\bm{v}^*_{i+1}\}$. Again using the fact that each data point is assigned to the closest center, we know that $\forall m \in [i]$, $S'_m \subseteq S_m$, and each point in $S_m \setminus S'_m$ is moved to $S'_{i+1}$ because it is now closer to the new center $\bm{v}^*_{i+1}$ than $\bm{v}^*_{m}$. Thus, we have
\begin{align}\label{eq-center-10}
\max\limits_{m \in [i+1]} \max\limits_{\bm{v} \in S'_m} d(\bm{v},\bm{v}^*_m) \leq \max\limits_{m \in [i]} \max\limits_{\bm{v} \in S_m} d(\bm{v},\bm{v}^*_m)=h.
\end{align}
Combining Eqs.~(\refeq{eq-center-9}) and~(\refeq{eq-center-10}) leads to
\begin{align*}
f_1(\bm{x}') = \min\limits_{\bm{v}^*_{p},\bm{v}^*_{q} \in X'} d(\bm{v}^*_{p},\bm{v}^*_{q})-\max\limits_{m \in [i+1]} \max\limits_{\bm{v} \in S'_m} d(\bm{v},\bm{v}^*_m)\geq 0.
\end{align*}
After generating $\bm{x}'$, which satisfies that $f_1(\bm{x}')\geq 0$ and $|\bm{x}'|=i+1$, and must be added into the population $P$, we have $J_{\max} = i+1$, implying that $J_{\max}$ increases. As analyzed for the case of $J_{\max}=1$, the probability of selecting $\bm{x}$ in line~3 and flipping only a specific 0-bit of $\bm{x}$ in line~4 is at least $(1/k) \cdot (1/(en))$. Thus, when $J_{\max}\geq 2$, $J_{\max}$ increases in one iteration with probability at least $1/(ekn)$, i.e., 
$$\mathrm{Pr}[J_{\max} \;\text{increases in one iteration} \mid J_{\max}\geq 2]\geq 1/(ekn).$$

Combining the above analyses for $J_{\max}=0$, $J_{\max}=1$ and $J_{\max}\geq 2$, we can conclude that the expected number of iterations until $J_{\max}=k$ (i.e., achieving a $2$-approximation ratio) is at most $ek+ekn+ekn(k-2)=ek^2n-ek(n-1)$.
\end{proof}

From the above proof, we see that when we set the initial solution of the GSEMO to the all-0s vector, we are able to make the initial value of $J_{\max}$ be 0. However, from a random starting vector selected from $\{0,1\}^n$, $J_{\max}$ may be not well defined, and it may require a lot of time to find a Boolean-vector solution $\bm{x}$ with $f_1(\bm{x}) \geq 0$. 

When applying the GSEMO to solve the $k$-\emph{t}MM clustering problem, the search space led by the adopted way of solution representation is exactly the whole solution space of the $k$-center clustering problem in Definition~\ref{def-center-real}. In fact, the approximation guarantee of the GSEMO for $k$-\emph{t}MM clustering also holds for $k$-center clustering, because the event $J_{\max}=k$ considered in the proof of Theorem~\ref{theo-center} directly implies a $2$-approximation ratio for $k$-center clustering.

\begin{theorem}\label{theo-center-real}
For $k$-center clustering in Definition~\ref{def-center-real}, the expected number of iterations of the GSEMO using Eq.~(\ref{eq-bi-center}), until achieving a $2$-approximation ratio, is at most $ek^2n-ek(n-1)$.
\end{theorem}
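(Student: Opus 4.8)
The plan is to reuse essentially all of the proof of Theorem~\ref{theo-center} and modify only the final correctness step. The crucial observation is that the bi-objective reformulation in Eq.~(\ref{eq-bi-center}), the Boolean-vector representation, and the resulting GSEMO dynamics are identical for both problems: selecting $k$ of the $n$ points of $\mathcal{D}$ as centers is precisely the feasible set for $k$-center clustering in Definition~\ref{def-center-real}. Consequently, the entire runtime analysis of Theorem~\ref{theo-center}---tracking the quantity $J_{\max}=\max\{|\bm{x}| \mid \bm{x}\in P, f_1(\bm{x})\geq 0\}$, observing that it never decreases, and bounding the expected time for it to reach $k$ by $ek^2n-ek(n-1)$---carries over verbatim. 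So first I would simply invoke that the expected number of iterations until $J_{\max}=k$ is at most $ek^2n-ek(n-1)$, exactly as established there, without redoing any of the selection/mutation probability computations.

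The only remaining task is to verify that the event $J_{\max}=k$ yields a $2$-approximation for the $k$-center objective $\max_{\bm{v}_i\in\mathcal{D}} d(\bm{v}_i,X)$ rather than for the tMM objective. When $J_{\max}=k$, there is a solution $\bm{x}\in P$ with $|\bm{x}|=k$, center set $X=\{\bm{v}^*_1,\ldots,\bm{v}^*_k\}$, and $f_1(\bm{x})\geq 0$. Since every data point is assigned to its closest center, the $k$-center objective of this solution equals $h:=\max_{m\in[k]}\max_{\bm{v}\in S_m} d(\bm{v},\bm{v}^*_m)$, the maximum point-to-center distance. Thus I only need to prove $h\leq 2\cdot\mathrm{OPT}$, where $\mathrm{OPT}$ now denotes the optimal value of Eq.~(\ref{eq-center-real}). (If $h=0$, the clustering is perfect and the bound is trivial, so I may assume $h>0$.)

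For the lower bound on $\mathrm{OPT}$, I would reuse the separation argument already developed for tMM. Exactly as in Eqs.~(\ref{eq-center-11}) and~(\ref{eq-center-12}), letting $\bm{v}^*$ be the point realizing the distance $h$ to its center, $f_1(\bm{x})\geq 0$ forces every pair of points in $X\cup\{\bm{v}^*\}$ to be at distance at least $h$, and $|X\cup\{\bm{v}^*\}|=k+1$. Now let $Y\subseteq\mathcal{D}$ be any optimal size-$k$ solution, so $\max_{\bm{v}_i\in\mathcal{D}} d(\bm{v}_i,Y)=\mathrm{OPT}$. By pigeonhole, two of the $k+1$ points in $X\cup\{\bm{v}^*\}$, say $\bm{a}$ and $\bm{b}$, are served by the same center $\bm{c}\in Y$, so $d(\bm{a},\bm{c})\leq\mathrm{OPT}$ and $d(\bm{b},\bm{c})\leq\mathrm{OPT}$; the triangle inequality then gives $h\leq d(\bm{a},\bm{b})\leq d(\bm{a},\bm{c})+d(\bm{c},\bm{b})\leq 2\cdot\mathrm{OPT}$. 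Combined with the fact that the solution's $k$-center objective is exactly $h$, this yields $\max_{\bm{v}_i\in\mathcal{D}} d(\bm{v}_i,X)\leq 2\cdot\mathrm{OPT}$, completing the argument.

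I do not anticipate a genuine obstacle: the real work was already done in Theorem~\ref{theo-center}, and the only conceptual difference is that for $k$-center the optimal centers need not coincide with the points they serve, so the two co-served points of $X\cup\{\bm{v}^*\}$ are bounded by \emph{twice} $\mathrm{OPT}$ through the triangle inequality, whereas for tMM those same two points directly lower-bound a cluster diameter (no factor). The one detail worth flagging is that this triangle-inequality step requires $d$ to be a metric, which holds by assumption in Definition~\ref{def-center-real}.
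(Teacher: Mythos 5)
Your proposal is correct and takes essentially the same route as the paper: reuse the runtime analysis of $J_{\max}$ from Theorem~\ref{theo-center} verbatim, identify the $k$-center objective of the solution with $h=\max_{m\in[k]}\max_{\bm{v}\in S_m} d(\bm{v},\bm{v}^*_m)$, and lower-bound $\mathrm{OPT}$ by applying pigeonhole and the triangle inequality to the $(k+1)$-point set $X\cup\{\bm{v}^*\}$ whose pairwise distances are at least $h$. Your observation that both co-served points lie within $\mathrm{OPT}$ of their shared optimal center merely streamlines the paper's two-case argument (which derives $h/2\leq \mathrm{OPT}$ in Eq.~(\ref{eq-center-real-4}) by separately treating whether the optimal center is one of the two points) into the equivalent single inequality $h\leq 2\cdot\mathrm{OPT}$.
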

\begin{proof}
The proof can be accomplished by following that of Theorem~\ref{theo-center}. The only difference is that we need to show that the solution $\bm{x}$ corresponding to $J_{\max}=k$ satisfies
\begin{align}\label{eq-center-real-1}
   \max\nolimits_{m \in [k]} \max\nolimits_{\bm{v} \in S_m} d(\bm{v},\bm{v}^*_m) \leq 2\cdot \mathrm{OPT},
\end{align}
instead of Eq.~(\ref{eq-center-14}). Note that $\bm{v}^*_m \in X$ serves as the center of the cluster $S_m=\{\bm{v} \in \mathcal{D} \mid \bm{v}^*_m =\arg\min_{\bm{u} \in X} d(\bm{v},\bm{u})\}$, where $X=\{\bm{v}^*_1,\bm{v}^*_2,\ldots,\bm{v}^*_{k}\}$ represents the $k$ data points corresponding to those 1-bits in $\bm{x}$; $\mathrm{OPT}$ denotes the optimal value of Eq.~(\ref{eq-center-real}). As $\mathcal{D}=\cup^k_{m=1} S_m$, we have
\begin{align}
    &\max_{\bm{v} \in \mathcal{D}} d(\bm{v},X)=\max_{m \in [k]} \max_{\bm{v} \in S_m}d(\bm{v},X)\nonumber\\
    &=\max_{m \in [k]} \max_{\bm{v} \in S_m}\min_{\bm{v}^*_j\in X} d(\bm{v},\bm{v}^*_j)=\max_{m \in [k]} \max_{\bm{v} \in S_m} d(\bm{v},\bm{v}^*_m),\label{eq-center-real-2}
\end{align}
where the second equality holds because $d(\bm{v},X)$ is the distance between $\bm{v}$ and its closest point in $X=\{\bm{v}^*_1,\bm{v}^*_2,\ldots,\bm{v}^*_{k}\}$, and the last equality holds because $\bm{v} \in S_m$ implies that the closest point of $\bm{v}$ in $X$ is $\bm{v}^*_m$. Eqs.~(\ref{eq-center-real-1}) and~(\ref{eq-center-real-2}) lead to
\begin{align}
\max\nolimits_{\bm{v} \in \mathcal{D}} d(\bm{v},X)\leq 2\cdot \mathrm{OPT},\nonumber
\end{align}
implying a $2$-approximation ratio for $k$-center clustering.

Next, we are to prove Eq.~(\ref{eq-center-real-1}). Let \begin{align}\label{eq-center-real-3}\max\nolimits_{m \in [k]} \max\nolimits_{\bm{v} \in S_m} d(\bm{v},\bm{v}^*_m)=h,\end{align} and we use $\bm{v}^*$ to denote the point which has the distance $h$ to its center. Eqs.~(\ref{eq-center-11}) and~(\ref{eq-center-12}) in the proof of Theorem~\ref{theo-center} have shown that the distance between any two points in $X \cup \{\bm{v}^*\}=\{\bm{v}^*_1,\bm{v}^*_2,\ldots,\bm{v}^*_{k},\bm{v}^*\}$ is at least $h$. Let $\bm{x}' \subseteq \mathcal{D}$ denote any solution with size $k$, and the corresponding $k$ centers and $k$ clusters are denoted as $\bm{v}'_1,\bm{v}'_2,\ldots,\bm{v}'_k$ and $S'_1,S'_2,\ldots,S'_k$, respectively. As $|X \cup \{\bm{v}^*\}|=k+1$, there must exist one cluster $S'_m$ containing at least two points (denoted as $\bm{u}_1$ and $\bm{u}_2$) in $X \cup \{\bm{v}^*\}$. If the center $\bm{v}'_m$ of $S'_m$ is one of these two points, we have $\max\nolimits_{\bm{v} \in S'_m} d(\bm{v},\bm{v}'_m) \geq h$. Otherwise, by the triangle inequality, we have $d(\bm{u}_1,\bm{v}'_m)+d(\bm{u}_2,\bm{v}'_m) \geq d(\bm{u}_1,\bm{u}_2)\geq h$, leading to $\max\nolimits_{\bm{v} \in S'_m} d(\bm{v},\bm{v}'_m) \geq h/2$ . Thus,
\begin{align}\max\nolimits_{m \in [k]} \max\nolimits_{\bm{v} \in S'_m} d(\bm{v},\bm{v}'_m)\geq h/2,\nonumber\end{align}
implying that the optimal objective value is at least $h/2$, i.e.,
\begin{align}\label{eq-center-real-4}
    h/2 \leq \mathrm{OPT}.
\end{align}
Combining Eqs.~(\refeq{eq-center-real-3}) and~(\refeq{eq-center-real-4}) leads to Eq.~(\ref{eq-center-real-1}). Thus, the theorem holds.
\end{proof}

\section{Theoretical Analysis of The GSEMO \\for discrete $k$-median Clustering}\label{sec-median}

The discrete $k$-median clustering problem in Definition~\ref{def-median} is to select a subset $X$ of $k$ data points from $F$, such that the sum of the distance of each data point in $\mathcal{D}$ to its nearest point in $X$ is minimized. To apply the GSEMO to solve discrete $k$-median clustering, we use a Boolean vector $\bm{x} \in \{0,1\}^{|F|}$ to represent a subset $X$ of $F$, where the $i$-th bit $x_i=1$ iff the $i$-th point in $F$ belongs to $X$. In the following analysis, we will not distinguish $\bm{x}\in \{0,1\}^{|F|}$ and its corresponding subset $X \subseteq F$ for convenience. Then, the original problem in Definition~\ref{def-median} is reformulated as a bi-objective maximization problem
\begin{align}\label{eq-bi-median}
&\max\nolimits_{\bm{x} \in \{0,1\}^{|F|}} \;\; (f_1(\bm{x}),f_2(\bm{x})),\\
&\text{where}\;\begin{cases}\nonumber
f_1(\bm{x}) = -\sum_{\bm{v}_i \in \mathcal{D}} d(\bm{v}_i,\bm{x}),\\
f_2(\bm x) = |\bm{x}|.
\end{cases}
\end{align}
Note that $d(\bm{v}_i,\bm{x})=\min \{d(\bm{v}_i,\bm{u}) \mid \bm{u} \in \bm{x}\}$ is the distance between $\bm{v}_i$ and its closest point in $\bm{x}$. That is, the GSEMO is to minimize the original objective function $\sum_{\bm{v}_i \in \mathcal{D}} d(\bm{v}_i,\bm{x})$ and maximize the subset size $|\bm{x}|$ simultaneously. To be well defined, we set the $f_1$ value to $-\infty$ for the all-0s vector $\bm{0}$. The Boolean-vector solutions with size larger than $k$ are excluded during the running of the GSEMO. When the GSEMO terminates, the Boolean-vector solution with size $k$ in the population will be output as the final solution.

For each $\bm{v}_i \in \mathcal{D}$, let $d^{\max}_i$ and $d^{\min}_i$ denote the distance between $\bm{v}_i$ and its farthest and closest (excluding itself) points in $F$, respectively. That is, $d^{\max}_i=\max \{d(\bm{v}_i,\bm{u}) \mid \bm{u} \in F\}$ and $d^{\min}_i=\min \{d(\bm{v}_i,\bm{u}) \mid \bm{u} \in F\setminus \{\bm{v}_i\}\}$. We use $d^{\min}_{(1)},\ldots,d^{\min}_{(n)}$ to denote a permutation of $d^{\min}_{1},\ldots,d^{\min}_{n}$ in ascending order. Theorem~\ref{theo-median} shows that the GSEMO achieves a $\frac{1}{1-\epsilon} \left(3+\frac{2}{p}\right)$-approximation ratio after running at most $O\left(\frac{k^2|F|^{2p}}{\epsilon}\log \frac{\sum^n_{i=1} d^{\max}_{i}}{\sum^{n-k}_{i=1} d^{\min}_{(i)}}\right)$ expected number of iterations. This implies that the Boolean-vector solution $\bm{x}$ output by the GSEMO satisfies $\sum_{\bm{v}_i \in \mathcal{D}} d(\bm{v}_i,\bm{x}) \leq \frac{1}{1-\epsilon} \left(3+\frac{2}{p}\right)\cdot \mathrm{OPT}$, where $\mathrm{OPT}$ denotes the optimal value of Eq.~(\ref{eq-median}). Note that the required expected number of iterations is polynomial in $|F|^p$, $1/\epsilon$, $\log n$ and $\log (\max_{i}d^{\max}_{i}/\min_{i}d^{\min}_{i})$.

The proof idea of Theorem~\ref{theo-median} is mainly to show that the GSEMO can simulate the process of local search in~\cite{arya2004local}. Here, we first prove general conditions under which an evolutionary process can simulate local search to achieve an approximation guarantee. Definitions~\ref{def-approx-local} and~\ref{def-local-approx} together present an $\alpha$-approximated problem by local search. That is, it is always possible to improve a feasible solution $\bm{x} \in \mathcal{X} \subseteq \{0,1\}^r$ by deleting at most $p$ points from $\bm{x}$ and inserting at most $q$ new points into $\bm{x}$, until achieving an $\alpha$-approximation ratio.

\begin{definition}[$(1-\delta)$-Approximate Local Optimum]\label{def-approx-local} Given a pseudo-Boolean problem $f: \mathcal{X} \rightarrow \mathbb{R}^+$ to be minimized, where $\mathcal{X} \subseteq \{0,1\}^r$ is the feasible solution space (i.e., the set of solutions satisfying the constraints), a feasible solution $\bm{x}$ is called a $(1-\delta)$-approximate local optimum if 
\begin{align*}
f(\bm{x}') > (1-\delta)\cdot f(\bm{x})
\end{align*}
for any feasible solution $\bm{x}'$ with $|\bm{x}\setminus \bm{x}'| \leq p$ and $|\bm{x}'\setminus \bm{x}| \leq q$, where $0 <\delta < 1$, and $p+q\geq 1$.
\end{definition}

\begin{definition}[$\alpha$-Approximation by Local Search]\label{def-local-approx} A pseudo-Boolean minimization problem $f: \mathcal{X} \rightarrow \mathbb{R}^+$ is $\alpha$-approximated by local search if for any $(1-\delta)$-approximate local optimum $\bm{x}$, it holds that
\begin{align*}
    f(\bm{x}) \leq \alpha\cdot \mathrm{OPT},
\end{align*}
where $\alpha\; (\alpha \geq 1)$ depends on $\delta$, and $\mathrm{OPT}$ denotes the optimal function value.
\end{definition}

We consider a typical evolutionary process as presented in Definition~\ref{def-evol-proc}, which covers a large class of EAs.

\begin{definition}[Evolutionary Process]\label{def-evol-proc}
An evolutionary process starts from a set of solutions (called a population), and iteratively improves the population by parent selection, reproduction, and survivor selection. Each iteration can be generally characterized by the following steps:
\begin{enumerate}
  \item \textbf{Parent Selection.} Some solutions $Q_i$ are selected from the population $P$ by using a parent selection strategy;
  \item \textbf{Reproduction.} Offspring solutions $R_i$ are generated by applying some reproduction operators to the selected parent solutions $Q_i$;
  \item Repeat the above process for $\lambda$ times;
  \item \textbf{Survivor Selection.} Select some solutions from the population $P$ and the newly generated offspring solutions $\cup^{\lambda}_{i=1} R_i$ to form the next population.
\end{enumerate}
\end{definition}

Lemma~\ref{lemma-general-condition} provides the conditions of an evolutionary process, required to achieve an $\alpha$-approximation ratio by simulating local search. It also gives the expected number of iterations of the evolutionary process. This lemma will be frequently used in the following analysis. Furthermore, it may be of independent interest for analyzing the approximation ability of EAs, e.g., it has been implicitly used in~\cite{friedrich2015maximizing,qian2019maximizing,qian2022result}.

\begin{lemma}\label{lemma-general-condition}
Given an $\alpha$-approximated problem by local search as presented in Definition~\ref{def-local-approx}, if an evolutionary process in Definition~\ref{def-evol-proc} satisfies the following conditions:
\begin{enumerate}
\item in parent selection, the best feasible solution is selected from the population with probability at least $\mathrm{Pr}_{\mathrm{sel}}$;
\item in reproduction, for any $i\leq p$ and $j\leq q$, a parent solution is flipped by $i$ specific 1-bits and $j$ specific 0-bits  with probability at least $\mathrm{Pr}_{\mathrm{rep}}(p,q)$;
\item in survivor selection, the best feasible solution generated so far is always kept,
\end{enumerate}
then starting from a population $P$, it achieves an approximation ratio of $\alpha$ after running at most \begin{align*}
\frac{1}{1-(1-\mathrm{Pr}_{\mathrm{sel}} \cdot \mathrm{Pr}_{\mathrm{rep}}(p,q))^{\lambda}} \cdot O\left(\frac{1}{\delta} \log  \frac{f_{\mathrm{init}}}{\mathrm{OPT}}\right)
\end{align*} expected number of iterations, where $f_{\mathrm{init}} =\min\{f(\bm{x})\mid \bm{x} \;\text{is a feasible solution in} \; P\}$. 
\end{lemma}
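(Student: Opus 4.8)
The plan is to track the best feasible solution found so far and show that its objective value drops by a constant multiplicative factor every time an improving move is realized, until the $\alpha$-approximation is attained. Let $\bm{x}^*$ denote the best feasible solution currently in the population; by condition~3 (the best feasible solution generated so far is always kept), its objective value $f(\bm{x}^*)$ is non-increasing across iterations, so there is no backsliding. The conceptual heart of the argument is the contrapositive of Definition~\ref{def-local-approx}: whenever $f(\bm{x}^*) > \alpha \cdot \mathrm{OPT}$, the solution $\bm{x}^*$ cannot be a $(1-\delta)$-approximate local optimum, and hence by Definition~\ref{def-approx-local} there exists a \emph{feasible} solution $\bm{x}'$ with $|\bm{x}^* \setminus \bm{x}'| \leq p$, $|\bm{x}' \setminus \bm{x}^*| \leq q$, and $f(\bm{x}') \leq (1-\delta) f(\bm{x}^*)$. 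I would isolate this step first, since it is what converts a purely neighborhood-based local-optimality statement into a guaranteed geometric decrease reachable by a bounded-radius bit-flip.

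Next I would bound the per-iteration probability of realizing such a move. The target $\bm{x}'$ is obtained from $\bm{x}^*$ by flipping the at most $p$ specific $1$-bits in $\bm{x}^* \setminus \bm{x}'$ and the at most $q$ specific $0$-bits in $\bm{x}' \setminus \bm{x}^*$. In a single reproduction, condition~1 selects $\bm{x}^*$ as parent with probability at least $\mathrm{Pr}_{\mathrm{sel}}$, and condition~2 flips exactly those bits with probability at least $\mathrm{Pr}_{\mathrm{rep}}(p,q)$, so one reproduction produces $\bm{x}'$ with probability at least $\mathrm{Pr}_{\mathrm{sel}} \cdot \mathrm{Pr}_{\mathrm{rep}}(p,q)$. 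Treating the $\lambda$ reproductions within an iteration as independent trials, at least one of them produces such an improving $\bm{x}'$ with probability at least $P_{\mathrm{succ}} := 1 - (1 - \mathrm{Pr}_{\mathrm{sel}} \cdot \mathrm{Pr}_{\mathrm{rep}}(p,q))^{\lambda}$. Because $\bm{x}'$ is feasible and strictly better, condition~3 then guarantees the best feasible objective value falls to at most $(1-\delta) f(\bm{x}^*)$.

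Finally I would assemble the drift/waiting-time bound. Each successful iteration multiplies the best objective value by at most $(1-\delta)$, and since $f(\bm{x}^*) \geq \mathrm{OPT}$ always holds, at most $T = O\!\left(\frac{1}{\delta} \log \frac{f_{\mathrm{init}}}{\mathrm{OPT}}\right)$ successful steps suffice to reach $f(\bm{x}^*) \leq \alpha \cdot \mathrm{OPT}$; here I would use $-\ln(1-\delta) \geq \delta$ to pass from $(1-\delta)^{T} f_{\mathrm{init}} \leq \alpha \cdot \mathrm{OPT}$ to the stated form, and absorb the constant $\alpha \geq 1$ into the $O$-notation. As long as the target is not met, a successful step occurs in each iteration with probability at least $P_{\mathrm{succ}}$, so the expected number of iterations to obtain one successful step is at most $1/P_{\mathrm{succ}}$; summing over the $T$ steps by linearity of expectation yields a total expected number of iterations of at most $\frac{1}{P_{\mathrm{succ}}} \cdot O\!\left(\frac{1}{\delta} \log \frac{f_{\mathrm{init}}}{\mathrm{OPT}}\right)$, which is exactly the claimed bound.

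I expect the main obstacle to be the first step rather than the arithmetic: one must argue carefully that failing to be a $(1-\delta)$-approximate local optimum furnishes a neighbor that is simultaneously \emph{feasible}, within the $(p,q)$ flip radius, and multiplicatively better, so that a single reproduction event can both land on it and have it retained. Definition~\ref{def-approx-local} quantifies only over feasible $\bm{x}'$, which is precisely what makes the generated offspring admissible for survivor selection; I would flag this feasibility point explicitly. The only other delicate modelling assumption is the independence of the $\lambda$ reproductions within an iteration, which is what licenses the $1-(1-\cdot)^{\lambda}$ form of $P_{\mathrm{succ}}$ (and reduces to $\mathrm{Pr}_{\mathrm{sel}} \cdot \mathrm{Pr}_{\mathrm{rep}}(p,q)$ for the GSEMO, where $\lambda = 1$).
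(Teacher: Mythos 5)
Your proposal is correct and takes essentially the same route as the paper's own proof: identify the best feasible solution kept by condition~3, use Definitions~\ref{def-approx-local} and~\ref{def-local-approx} (in contrapositive form) to obtain a feasible neighbor within the $(p,q)$ flip radius with $f(\bm{x}') \leq (1-\delta) f(\bm{x}^*)$, lower-bound the per-iteration success probability by $1-(1-\mathrm{Pr}_{\mathrm{sel}} \cdot \mathrm{Pr}_{\mathrm{rep}}(p,q))^{\lambda}$, and count $O\bigl(\tfrac{1}{\delta}\log \tfrac{f_{\mathrm{init}}}{\mathrm{OPT}}\bigr)$ multiplicative-decrease steps via a waiting-time argument. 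The refinements you flag explicitly (feasibility of the neighbor, independence of the $\lambda$ reproductions, and $-\ln(1-\delta)\geq\delta$) are exactly what the paper leaves implicit, so there is no substantive difference.
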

\begin{proof}
Because the problem is $\alpha$-approximated by local search, we only need to analyze the expected number of iterations until generating a $(1-\delta)$-approximate local optimum $\bm{x}$, which satisfies $f(\bm{x}) \leq \alpha \cdot \mathrm{OPT}$, achieving the desired approximation ratio. 

Let $\hat{\bm{x}}$ denote the best feasible solution in the population $P$. We consider $f(\hat{\bm{x}})$, which obviously will not increase, because the evolutionary process keeps the best feasible solution generated so far (i.e., the 3rd condition). As long as $\hat{\bm{x}}$ is not a $(1-\delta)$-approximate local optimum, we know from Definition~\ref{def-approx-local} that a new feasible offspring solution $\bm{x}'$ with 
\begin{align*}
f(\bm{x}')\leq (1-\delta)\cdot f(\hat{\bm{x}})
\end{align*} 
can be generated through selecting $\hat{\bm{x}}$ in parent selection and flipping at most $p$ specific 1-bits and $q$ specific 0-bits (i.e., deleting at most $p$ points inside $\hat{\bm{x}}$ and inserting at most $q$ new points into $\hat{\bm{x}}$) in reproduction, the probability of which is at least $\mathrm{Pr}_{\mathrm{sel}} \cdot \mathrm{Pr}_{\mathrm{rep}}(p,q)$ by the 1st and 2nd conditions. As parent selection and reproduction are repeated for $\lambda$ times in each iteration of the evolutionary process, as shown in Definition~\ref{def-evol-proc}, the probability of generating $\bm{x}'$ in each iteration is at least
\begin{align*}
1-(1-\mathrm{Pr}_{\mathrm{sel}} \cdot \mathrm{Pr}_{\mathrm{rep}}(p,q))^{\lambda}.
\end{align*} 
We know from the 3rd condition that the evolutionary process keeps the best feasible solution generated so far, implying that the best feasible solution in the next population is at least as good as $\bm{x}'$, and thus $f(\hat{\bm{x}})$ decreases by at least a factor of $1/(1-\delta)$. Such a decrease on $f(\hat{\bm{x}})$ is called a successful step. Thus, a successful step needs at most 
\begin{align*}
1/\left(1-(1-\mathrm{Pr}_{\mathrm{sel}} \cdot \mathrm{Pr}_{\mathrm{rep}}(p,q))^{\lambda}\right)
\end{align*}  expected number of iterations. Until generating a $(1-\delta)$-approximate local optimum, the required number of successful steps is at most 
\begin{align*}
\log_{\frac{1}{1-\delta}} \frac{f_{\mathrm{init}}}{\mathrm{OPT}}=O\left(\frac{1}{\delta} \log  \frac{f_{\mathrm{init}}}{\mathrm{OPT}}\right),
\end{align*} 
where $f_{\mathrm{init}}$ is the $f$ value of the best feasible solution in the initial population. Thus, the expected number of iterations until generating a $(1-\delta)$-approximate local optimum is at most
\begin{equation*}
\begin{aligned}
\frac{1}{1-(1-\mathrm{Pr}_{\mathrm{sel}} \cdot \mathrm{Pr}_{\mathrm{rep}}(p,q))^{\lambda}} \cdot O\left(\frac{1}{\delta} \log  \frac{f_{\mathrm{init}}}{\mathrm{OPT}}\right).
\end{aligned}
\end{equation*}
\end{proof}

Lemma~\ref{lemma-median} shows that the discrete $k$-median clustering problem is $\frac{1}{1-\epsilon} \left(3+\frac{2}{p}\right)$-approximated by a $(1-\frac{\epsilon}{k})$-approximate local optimum. That is, $\delta=\epsilon/k$ and $p=q$ in Definition~\ref{def-approx-local}, and $\alpha=\frac{1}{1-\epsilon} \left(3+\frac{2}{p}\right)$ in Definition~\ref{def-local-approx}. This lemma is inspired from the analysis of local search in~\cite{arya2004local}, which tries to repeatedly improve a subset of $k$ points by dropping at most $p$ points and adding the same number of new points.

\begin{lemma}\label{lemma-median}
Let $X$ be a subset of $F$ with size $k$. If no subset $X'$ of $F$ with the objective value 
\begin{align*}
\sum_{\bm{v}_i \in \mathcal{D}} d(\bm{v}_i,X') \leq \left(1-\frac{\epsilon}{k}\right)\cdot \sum_{\bm{v}_i \in \mathcal{D}} d(\bm{v}_i,X)
\end{align*}
can be achieved by deleting at most $p$ points inside $X$ and inserting the same number of points outside into $X$, then \begin{align}\label{eq-median-3}
\sum_{\bm{v}_i \in \mathcal{D}} d(\bm{v}_i,X) \leq \frac{1}{1-\epsilon} \left(3+\frac{2}{p}\right)\cdot \mathrm{OPT},\end{align}
where $\epsilon >0$ and $p \geq 1$.
\end{lemma}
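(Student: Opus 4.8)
The plan is to derive the approximation bound on $X$ from its local-optimality hypothesis by adapting the local-search analysis of Arya et al.~\cite{arya2004local}. Write $\mathrm{cost}(X)=\sum_{\bm{v}_i\in\mathcal{D}}d(\bm{v}_i,X)$, let $O\subseteq F$ be an optimal solution with $\mathrm{OPT}=\sum_{\bm{v}_i\in\mathcal{D}}d(\bm{v}_i,O)$, and for each $\bm{v}_i$ let $\sigma(\bm{v}_i)\in X$ and $\sigma^*(\bm{v}_i)\in O$ be its nearest center in $X$ and in $O$, so that $d(\bm{v}_i,X)=d(\bm{v}_i,\sigma(\bm{v}_i))$ and $d(\bm{v}_i,O)=d(\bm{v}_i,\sigma^*(\bm{v}_i))$. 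The hypothesis states precisely that every $X'$ obtained from $X$ by deleting at most $p$ points and inserting the same number satisfies $\mathrm{cost}(X')-\mathrm{cost}(X)>-\tfrac{\epsilon}{k}\,\mathrm{cost}(X)$. I would feed a carefully chosen family of such ``test swaps'' into this inequality and sum the results.

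Next I would build the test swaps from the capturing structure. Say $\bm{u}\in X$ \emph{captures} $\bm{o}\in O$ if $\bm{u}$ is the point of $X$ nearest to $\bm{o}$; then every optimal center is captured by exactly one center of $X$. Classify the centers of $X$ by how many optimal centers they capture (zero, one, or at least two) and, exactly as in~\cite{arya2004local}, group them with free (zero-capturing) centers into swaps $(A,B)$ with $A\subseteq X$, $B\subseteq O$, $|A|=|B|\le p$, such that (i) every optimal center lies in $B$ for exactly one swap; (ii) whenever some $\bm{u}\in A$ captures an optimal center, all centers it captures also lie in $B$; and (iii) a center capturing at least two optimal centers is deleted only inside a swap that simultaneously inserts all the optimal centers it captures. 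Properties (ii) and (iii) are what make the reassignment below legal, while using multiswaps of size up to $p$ (rather than single swaps) is what will eventually produce the $2/p$ term.

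Then, for each swap $(A,B)$ I would exhibit an explicit (not necessarily optimal) reassignment of the clients and bound the induced cost change, which upper-bounds the true change used in the local-optimality inequality. A client $\bm{v}_i$ with $\sigma^*(\bm{v}_i)\in B$ is reassigned to $\sigma^*(\bm{v}_i)$, changing its cost by at most $d(\bm{v}_i,O)-d(\bm{v}_i,X)$. A client with $\sigma^*(\bm{v}_i)\notin B$ but $\sigma(\bm{v}_i)\in A$ is ``displaced'' and reassigned to the $X$-center $\bm{w}$ capturing $\sigma^*(\bm{v}_i)$, which by property (ii) is not in $A$; two applications of the triangle inequality give $d(\bm{v}_i,\bm{w})\le 2d(\bm{v}_i,O)+d(\bm{v}_i,X)$, so its cost increases by at most $2d(\bm{v}_i,O)$. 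All other clients keep their centers. Summing over the family, each client is ``helped'' exactly once by property (i), contributing $\sum_{\bm{v}_i\in\mathcal{D}}(d(\bm{v}_i,O)-d(\bm{v}_i,X))=\mathrm{OPT}-\mathrm{cost}(X)$, while the displacement terms contribute at most $(2+\tfrac{2}{p})\mathrm{OPT}$, the extra $\tfrac{2}{p}\mathrm{OPT}$ coming from bounding, via the grouping, how often a given client is displaced.

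Finally I would combine these. Since each swap inserts at least one optimal center and there are $k$ of them, the family has at most $k$ swaps, so the summed slack is at most $k\cdot\tfrac{\epsilon}{k}\,\mathrm{cost}(X)=\epsilon\,\mathrm{cost}(X)$. Hence $-\epsilon\,\mathrm{cost}(X)<(\mathrm{OPT}-\mathrm{cost}(X))+(2+\tfrac{2}{p})\mathrm{OPT}$, which rearranges to $(1-\epsilon)\,\mathrm{cost}(X)<(3+\tfrac{2}{p})\mathrm{OPT}$ and thus to Eq.~(\ref{eq-median-3}). I expect the main obstacle to be the second step together with the displacement count in the third: constructing the swap family so that it simultaneously respects the size-$p$ budget, covers every optimal center exactly once, keeps every displaced client's reassignment off the deleted set, and admits a clean bound on how often each client is displaced. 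This bookkeeping is exactly what converts the per-swap inequalities into the $(3+2/p)$ constant and is the delicate heart of the argument.
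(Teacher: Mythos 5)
Your proposal is correct and takes essentially the same route as the paper: the paper invokes Arya et al.~\cite{arya2004local}'s weighted multiswap inequality as a black box (a collection $Q$ of swap outcomes with weights satisfying $\sum_{X' \in Q} w(X') \leq k$), whereas you re-derive its capture/swap-family/reassignment skeleton, and your key new step --- feeding the per-swap slack $\frac{\epsilon}{k}\sum_{\bm{v}_i \in \mathcal{D}} d(\bm{v}_i,X)$ into a family of total size at most $k$ to get $(1-\epsilon)\sum_{\bm{v}_i \in \mathcal{D}} d(\bm{v}_i,X) \leq \left(3+\frac{2}{p}\right)\mathrm{OPT}$ --- is exactly the paper's adaptation. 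One minor bookkeeping caveat: when a center of $X$ captures more than $p$ optimal centers, your property (i) with $|A|=|B|\leq p$ cannot hold literally, and the standard fix (used implicitly by the paper via the weights $w(X')$) is fractionally weighted single swaps, so ``at most $k$ swaps'' should read ``total weight at most $k$,'' which leaves your final inequality unchanged.
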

\begin{proof}
In Section~3 of~\cite{arya2004local}, the special case with $\epsilon = 0$ has been proved. Their proof uses a series of multiswaps of $X$, where a multiswap of $X$ deletes at most $p$ points inside $X$ and inserts the same number of points outside into $X$. Let $Q$ denote the collection of all the sets generated by these multiswaps of $X$. Considering the condition of this lemma with $\epsilon = 0$, it holds that for each $X' \in Q$,
\begin{align}\label{eq-median-1}
    \sum\nolimits_{\bm{v}_i \in \mathcal{D}} d(\bm{v}_i,X') - \sum\nolimits_{\bm{v}_i \in \mathcal{D}} d(\bm{v}_i,X) >0.
\end{align}
By assigning a positive real weight $w(X')$ with each $X' \in Q$, the proof in~\cite{arya2004local} derives  
\begin{align}\label{eq-median-2}
    &\sum_{X' \in Q} w(X')\cdot \left(\sum_{\bm{v}_i \in \mathcal{D}} d(\bm{v}_i,X') - \sum_{\bm{v}_i \in \mathcal{D}} d(\bm{v}_i,X)\right) \\
    &\leq (3+2/p)\cdot \mathrm{OPT} - \sum\nolimits_{\bm{v}_i \in \mathcal{D}} d(\bm{v}_i,X),\nonumber
\end{align}
where $\sum_{X' \in Q} w(X')\leq k$. Combining Eqs.~(\ref{eq-median-1}) and~(\ref{eq-median-2}) leads to \begin{align*}
\sum\nolimits_{\bm{v}_i \in \mathcal{D}} d(\bm{v}_i,X) \leq (3+2/p)\cdot \mathrm{OPT},
\end{align*} 
i.e., Eq.~(\ref{eq-median-3}) with $\epsilon=0$.

We adapt their proof to the case of $\epsilon >0$. According to the condition of this lemma, we now have, for each $X' \in Q$,
\begin{align}\label{eq-median-4}
    \sum_{\bm{v}_i \in \mathcal{D}} d(\bm{v}_i,X') - \sum_{\bm{v}_i \in \mathcal{D}} d(\bm{v}_i,X) > -\frac{\epsilon}{k} \sum_{\bm{v}_i \in \mathcal{D}} d(\bm{v}_i,X).
\end{align}
Combining Eqs.~(\ref{eq-median-2}) and~(\ref{eq-median-4}) leads to
\begin{align*}
    &(3+2/p)\cdot \mathrm{OPT} - \sum\nolimits_{\bm{v}_i \in \mathcal{D}} d(\bm{v}_i,X) \\
    &\geq \sum_{X' \in Q} w(X')\cdot \left(-\frac{\epsilon}{k} \sum_{\bm{v}_i \in \mathcal{D}} d(\bm{v}_i,X)\right)\geq -\epsilon \sum_{\bm{v}_i \in \mathcal{D}} d(\bm{v}_i,X),
\end{align*}
where the last inequality holds by $\sum_{X' \in Q} w(X')\leq k$. Thus, Eq.~(\ref{eq-median-3}) holds.
\end{proof}

By applying Lemma~\ref{lemma-general-condition} to the GSEMO (i.e., proving that the GSEMO satisfies the three conditions in Lemma~\ref{lemma-general-condition}), we have Theorem~\ref{theo-median}. But in the proof, we first need to analyze the expected number of iterations until generating a solution with size $k$, i.e., a feasible solution, as the GSEMO starts from the initial solution with all 0s.

\begin{theorem}\label{theo-median}
For discrete $k$-median clustering in Definition~\ref{def-median}, the expected number of iterations of the GSEMO using Eq.~(\ref{eq-bi-median}), until achieving a $\frac{1}{1-\epsilon} \left(3+\frac{2}{p}\right)$-approximation ratio, is at most $O\left(\frac{k^2|F|^{2p}}{\epsilon}\log \frac{\sum^n_{i=1} d^{\max}_{i}}{\sum^{n-k}_{i=1} d^{\min}_{(i)}}\right)$, where $\epsilon>0$, $p\geq 1$, $d^{\max}_i=\max \{d(\bm{v}_i,\bm{u}) \mid \bm{u} \in F\}$, $d^{\min}_{(1)},\ldots,d^{\min}_{(n)}$ is a permutation of $d^{\min}_{1},\ldots,d^{\min}_{n}$ in ascending order, and $d^{\min}_i=\min \{d(\bm{v}_i,\bm{u}) \mid \bm{u} \in F\setminus \{\bm{v}_i\}\}$.
\end{theorem}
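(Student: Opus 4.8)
The plan is to split the run of the GSEMO into two phases: an \textbf{initialization phase} that produces the first feasible solution (one of size exactly $k$), and a \textbf{local-search-simulation phase} to which I apply Lemma~\ref{lemma-general-condition}. The approximation quality is already delivered by Lemma~\ref{lemma-median}, which states that discrete $k$-median clustering is $\frac{1}{1-\epsilon}\left(3+\frac{2}{p}\right)$-approximated by local search with $\delta=\epsilon/k$ and $q=p$; here the feasible space $\mathcal{X}$ is the set of Boolean vectors of size exactly $k$, and a multiswap (delete at most $p$, insert the same number) preserves this size. The GSEMO is precisely the evolutionary process of Definition~\ref{def-evol-proc} with $\lambda=1$, uniform parent selection, bit-wise mutation, and survivor selection keeping all non-dominated solutions. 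Thus the theorem reduces to (i) bounding the initialization time, (ii) checking the three conditions of Lemma~\ref{lemma-general-condition}, and (iii) bounding $f_{\mathrm{init}}/\mathrm{OPT}$.

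For initialization I would track $J=\max\{|\bm{x}|\mid \bm{x}\in P\}$, which starts at $0$ and never decreases. When $J=i<k$, selecting the size-$i$ solution $\bm{x}\in P$ and flipping exactly one of its $0$-bits produces a size-$(i+1)$ offspring $\bm{x}'\supseteq\bm{x}$; since $d(\bm{v}_i,\bm{x}')\leq d(\bm{v}_i,\bm{x})$ for every $\bm{v}_i$, we have $f_1(\bm{x}')\geq f_1(\bm{x})$, so $\bm{x}'$ weakly dominates $\bm{x}$, is accepted, and $J$ becomes $i+1$. Using uniform selection (probability $\geq 1/|P|\geq 1/k$, as $|P|\leq k$) and the probability $\geq 1/(e|F|)$ of flipping a single specific bit, each increase of $J$ needs at most $ek|F|$ expected iterations, so reaching $J=k$ costs $O(k^2|F|)$ iterations, dominated by the final bound.

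Next I would verify the three conditions. Uniform parent selection gives $\mathrm{Pr}_{\mathrm{sel}}\geq 1/|P|\geq 1/k$. For reproduction, the probability of flipping $i\leq p$ specific $1$-bits and $j\leq q$ specific $0$-bits under bit-wise mutation is $(1/|F|)^{i+j}(1-1/|F|)^{|F|-i-j}$, which is smallest when $i+j$ is largest, so $\mathrm{Pr}_{\mathrm{rep}}(p,q)\geq (1/|F|)^{2p}(1-1/|F|)^{|F|-2p}\geq 1/(e|F|^{2p})$. The third condition — that the best feasible solution is always kept — is the point needing the most care: a size-$k$ solution can leave $P$ only via an offspring that weakly dominates it, and such an offspring has $f_2\geq k$; since solutions of size exceeding $k$ are excluded, it has $f_2=k$ and $f_1$ no smaller, i.e., it is again a size-$k$ solution with objective value no larger. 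Hence, once a feasible solution exists, the best feasible objective never increases. With $\lambda=1$ we have $1-(1-\mathrm{Pr}_{\mathrm{sel}}\cdot\mathrm{Pr}_{\mathrm{rep}}(p,q))^{\lambda}=\mathrm{Pr}_{\mathrm{sel}}\cdot\mathrm{Pr}_{\mathrm{rep}}(p,q)$, so Lemma~\ref{lemma-general-condition} with $\delta=\epsilon/k$ gives an expected $O\left(\frac{k^2|F|^{2p}}{\epsilon}\log\frac{f_{\mathrm{init}}}{\mathrm{OPT}}\right)$ iterations for this phase.

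Finally I would bound $f_{\mathrm{init}}/\mathrm{OPT}$. Since every data point is at distance at most $d^{\max}_i$ from any chosen point, $f_{\mathrm{init}}\leq \sum_{i=1}^n d^{\max}_i$. For the lower bound on $\mathrm{OPT}$, at most $k$ of the $n$ data points can coincide with the $k$ chosen facilities and contribute $0$, while every other point contributes at least $d^{\min}_i$; taking the $n-k$ smallest such values yields $\mathrm{OPT}\geq \sum_{i=1}^{n-k} d^{\min}_{(i)}$. Combining these gives $\log(f_{\mathrm{init}}/\mathrm{OPT})\leq \log\frac{\sum_{i=1}^n d^{\max}_i}{\sum_{i=1}^{n-k} d^{\min}_{(i)}}$, and adding the $O(k^2|F|)$ initialization cost yields the claimed bound. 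The main obstacle is not any single calculation but the bookkeeping that ties the GSEMO's "keep all non-dominated solutions" policy to the "keep the best feasible solution" condition of Lemma~\ref{lemma-general-condition}, together with the separate treatment of the infeasible start, since the GSEMO begins from $\bm{0}$ whereas Lemma~\ref{lemma-general-condition} presumes a feasible starting population.
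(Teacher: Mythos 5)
Your proposal is correct and follows essentially the same route as the paper's proof: the same two-phase decomposition (track $J_{\max}$ until a size-$k$ solution appears, then invoke Lemma~\ref{lemma-general-condition} with $\delta=\epsilon/k$ from Lemma~\ref{lemma-median}), the same verification of the three conditions with $\mathrm{Pr}_{\mathrm{sel}}\geq 1/k$ and $\mathrm{Pr}_{\mathrm{rep}}(p,q)\geq 1/(e|F|^{2p})$, and the same bounds $f_{\mathrm{init}}\leq \sum^n_{i=1}d^{\max}_i$ and $\mathrm{OPT}\geq\sum^{n-k}_{i=1}d^{\min}_{(i)}$. The only (immaterial) deviations are your slightly looser $O(k^2|F|)$ initialization bound, where the paper flips any of the $|F|-i$ available $0$-bits to get $ek|F|\ln\frac{|F|}{|F|-k}$, and your acceptance argument via weak domination of the parent rather than the paper's observation that the offspring has the largest $f_2$ value — both absorbed by the final bound.
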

\begin{proof}
The optimization process is divided into two phases: (1)~starts from the initial solution $\bm{0}$ and finishes after finding a solution with size $k$; (2)~starts after phase~(1) and finishes after achieving the desired approximation ratio $\frac{1}{1-\epsilon} \left(3+\frac{2}{p}\right)$. We analyze the upper bound on the expected number of iterations required by each phase, respectively, and then sum them up to get an upper bound on the total expected number of iterations of the GSEMO.

For phase (1), we consider the maximum number of 1-bits of the solutions in the population $P$, denoted by $J_{\max}$. That is, $J_{\max}=\max\{|\bm{x}| \mid \bm{x} \in P\}$. Obviously, $J_{\max}=k$ implies that a solution with size $k$ has been found, i.e., the goal of phase~(1) has been reached. As the GSEMO starts from the all-0s vector $\bm{0}$, $J_{\max}$ is initially 0. Assume that currently $J_{\max}=i < k$, and let $\bm{x}$ be the corresponding solution, i.e., $|\bm{x}|=i$. $J_{\max}$ will not decrease because $\bm{x}$ cannot be weakly dominated by a solution with less 1-bits. By selecting $\bm{x}$ in line~3 of Algorithm~\ref{algo:GSEMO} and flipping only one 0-bit of $\bm{x}$ (i.e., adding a new point into $\bm{x}$) in line~4, which occur with probability $(1/|P|) \cdot (|F|-i)\cdot (1/|F|)(1-1/|F|)^{|F|-1} \geq (|F|-i)/(e|P||F|)$, a new solution $\bm{x}'$ with $|\bm{x}'|=i+1$ can be generated in one iteration of the GSEMO. The population size $|P|$ is at most $i+1$, because the second objective $f_2$ can only take values $0,1,\ldots,i$, and the solutions in $P$ are incomparable. Thus, the probability of generating a new solution $\bm{x}'$ with $|\bm{x}'|=i+1$ in one iteration is at least $(|F|-i)/(e(i+1)|F|)$. Because the newly generated solution $\bm{x}'$ now has the largest number of 1-bits and no solution in $P$ can dominate it, it will be included into $P$, making $J_{\max}=i+1$. This implies that the probability of increasing $J_{\max}$ in one iteration of the GSEMO is at least $(|F|-i)/(e(i+1)|F|)$, i.e., 
$$\mathrm{Pr}[J_{\max} \;\text{increases in one iteration} \mid J_{\max}\!=\!i]\geq \frac{|F|-i}{e(i\!+\!1)|F|}.$$We then get that the expected number of iterations of phase~(1) (i.e., to make $J_{\max}$ reach $k$) is at most
\begin{align}\label{eq-median-6}
\sum^{k-1}_{i=0} \frac{e(i+1)|F|}{|F|-i} \leq ek|F| \ln \frac{|F|}{|F|-k},
\end{align}
where we assume $k<|F|$, which obviously holds in practice. Note that the population $P$ will always contain a solution with size $k$ once generated, since it has the largest $f_2$ value $k$ and can be weakly dominated by only other solutions with size $k$.

Next, we consider phase~(2) by applying Lemma~\ref{lemma-general-condition}. According to Lemma~\ref{lemma-median}, we know that the discrete $k$-median clustering problem is $\frac{1}{1-\epsilon} \left(3+\frac{2}{p}\right)$-approximated by a $(1-\frac{\epsilon}{k})$-approximate local optimum. That is, $\delta=\epsilon/k$ in Definition~\ref{def-approx-local}, and $\alpha=\frac{1}{1-\epsilon} \left(3+\frac{2}{p}\right)$ in Definition~\ref{def-local-approx}. Furthermore, the parameters $p$ and $q$ in Definition~\ref{def-approx-local} are equal here. As presented in Algorithm~\ref{algo:GSEMO}, the GSEMO performs the following three steps in each iteration: selects a solution from the current population uniformly at random, applies the bit-wise mutation operator only, and uses the generated offspring solution to update the population. It is obvious that the GSEMO follows the evolutionary process with $\lambda=1$ in Definition~\ref{def-evol-proc}.

Now we are to analyze the three conditions of Lemma~\ref{lemma-general-condition}. Here, a feasible solution is a solution with size $k$, i.e., $k$ clusters. During phase~(2), the population $P$ always contains only one feasible solution, denoted as $\hat{\bm{x}}$. Because $\hat{\bm{x}}$ can be dominated by only other solutions with size $k$ and larger $f_1$ values, where $f_1(\hat{\bm{x}})=-\sum_{\bm{v}_i \in \mathcal{D}} d(\bm{v}_i,\hat{\bm{x}})$ as in Eq.~(\ref{eq-bi-median}), the 3rd condition of Lemma~\ref{lemma-general-condition} is satisfied, i.e., the best feasible solution generated so far is always kept in the population. By uniform parent selection in line~3 of Algorithm~\ref{algo:GSEMO}, the probability of selecting $\hat{\bm{x}}$ for reproduction is $1/|P|$. As analyzed in the proof of Theorem~\ref{theo-center}, the population size $|P|$ is at most $k+1$, because the second objective $f_2$ can only take values $0,1,\ldots,k$, and the solutions in $P$ are incomparable. Note that the solutions with size larger than $k$ are excluded during the running of the GSEMO. In fact, $|P| \leq k$, because the all-0s solution (having $f_1(\bm{0})=-\infty$ and $f_2(\bm{0})=0$) is dominated by any other solution, and will not exist in $P$ once a solution with size larger than 0 has been generated. Thus, the 1st condition of Lemma~\ref{lemma-general-condition} is satisfied with $\mathrm{Pr}_{\mathrm{sel}}=1/k$. By bit-wise mutation in line~4 of Algorithm~\ref{algo:GSEMO}, the probability of flipping at most $p$ specific 1-bits and $p$ specific 0-bits is at least $(1/|F|^{2p})(1-1/|F|)^{|F|-2p}\geq 1/(e|F|^{2p})$, where the inequality holds by $p\geq 1$. This implies that the 2nd condition of Lemma~\ref{lemma-general-condition} is satisfied with $\mathrm{Pr}_{\mathrm{rep}}(p,q)=1/(e|F|^{2p})$. Thus, by Lemma~\ref{lemma-general-condition}, the expected number of iterations of phase~(2) is at most
\begin{align}\label{eq-median-5}
&\frac{1}{1-(1-\mathrm{Pr}_{\mathrm{sel}} \cdot \mathrm{Pr}_{\mathrm{rep}}(p,q))^{\lambda}} \cdot O\left(\frac{1}{\delta} \log  \frac{f_{\mathrm{init}}}{\mathrm{OPT}}\right)\nonumber\\
&= ek|F|^{2p} \cdot O\left(\frac{k}{\epsilon} \log  \frac{f_{\mathrm{init}}}{\mathrm{OPT}}\right),
\end{align} 
where the equality holds by $\mathrm{Pr}_{\mathrm{sel}}=1/k$, $\mathrm{Pr}_{\mathrm{rep}}(p,q)=1/(e|F|^{2p})$, $\lambda=1$, and $\delta=\epsilon/k$.

For any solution $\bm{x}$ with size $k$, as $d^{\max}_i=\max \{d(\bm{v}_i,\bm{u}) \mid \bm{u} \in F\}$ and $\bm{x} \subseteq F$, we have $d(\bm{v}_i,\bm{x})=\min \{d(\bm{v}_i,\bm{u}) \mid \bm{u} \in \bm{x}\} \leq d^{\max}_{i}$. Thus, $\sum_{\bm{v}_i \in \mathcal{D}} d(\bm{v}_i,\bm{x})\leq \sum^n_{i=1} d^{\max}_{i}$, implying that the objective function value of the first generated feasible solution (i.e., the first generated solution with size $k$)
\begin{align}\label{eq-median-21}f_{\mathrm{init}} \leq \sum\nolimits^n_{i=1} d^{\max}_{i}.
\end{align} Furthermore,
\begin{align*}
\sum_{\bm{v}_i \in \mathcal{D}} d(\bm{v}_i,\bm{x})=
\sum_{\bm{v}_i \in \mathcal{D}\setminus \bm{x}} d(\bm{v}_i,\bm{x}) \geq 
\sum_{\bm{v}_i \in \mathcal{D}\setminus \bm{x}} d^{\min}_i \geq \sum^{n-k}_{i=1} d^{\min}_{(i)},
\end{align*}
where the first inequality holds by $d(\bm{v}_i,\bm{x})=\min \{d(\bm{v}_i,\bm{u}) \mid \bm{u} \in \bm{x}\} \geq \min \{d(\bm{v}_i,\bm{u}) \mid \bm{u} \in F \setminus \{\bm{v}_i\}\}=d^{\min}_{i}$ due to $\bm{x} \subseteq F \setminus \{\bm{v}_i\}$, and the last inequality holds because $|\mathcal{D}\setminus \bm{x}|\geq n-k$, and $d^{\min}_{(1)},d^{\min}_{(2)},\ldots,d^{\min}_{(n)}$ is a permutation of $d^{\min}_{1},d^{\min}_{2},\ldots,d^{\min}_{n}$ in ascending order. This implies
\begin{align}\label{eq-median-20}
\mathrm{OPT} \geq \sum\nolimits^{n-k}_{i=1} d^{\min}_{(i)}.
\end{align}
Applying Eqs.~(\ref{eq-median-21}) and~(\ref{eq-median-20}) to Eq.~(\ref{eq-median-5}), the expected number of iterations of phase~(2) is at most
\begin{align}\label{eq-median-7}
ek|F|^{2p} \cdot O\left(\frac{k}{\epsilon}\log \frac{\sum^n_{i=1} d^{\max}_{i}}{\sum^{n-k}_{i=1} d^{\min}_{(i)}}\right).
\end{align}

By combining the expected number of iterations (i.e., Eqs.~(\ref{eq-median-6}) and~(\ref{eq-median-7})) in the above two phases, we can conclude that the GSEMO requires at most
\begin{align*}
&ek|F| \ln \frac{|F|}{|F|-k}+ek|F|^{2p} \cdot O\left(\frac{k}{\epsilon}\log \frac{\sum^n_{i=1} d^{\max}_{i}}{{\sum^{n-k}_{i=1} d^{\min}_{(i)}}}\right)\\
&=O\left(\frac{k^2|F|^{2p}}{\epsilon}\log \frac{\sum^n_{i=1} d^{\max}_{i}}{\sum^{n-k}_{i=1} d^{\min}_{(i)}}\right)
\end{align*}
iterations in expectation to achieve an approximation ratio of $\frac{1}{1-\epsilon} \left(3+\frac{2}{p}\right)$. Thus, the theorem holds.
\end{proof}

We have considered discrete $k$-median clustering, where the centers are selected from a specific set $F$. When the centers can be placed anywhere in space, we cannot directly use the way of Boolean-vector solution representation, because the vector length will be infinite. But we can select the $k$ centers from a discrete set $C$ of candidate centers of size $O(k^2\epsilon^{-2l}\log^2 n)$, which contains a solution $X \subseteq C$ with size $k$ such that $\sum_{\bm{v}_i \in \mathcal{D}} d(\bm{v}_i,X)\leq (1+\epsilon)\cdot \mathrm{OPT}$, as shown in Lemma~5.3 of~\cite{har2004coresets}. This idea of using an $\epsilon$-approximate centroid set will also be used in the following analysis for $k$-means clustering.

\section{Theoretical Analysis of The GSEMO\\ for $k$-Means Clustering}\label{sec-means}

The $k$-means clustering problem in Definition~\ref{def-means} is to determine a set $X$ of $k$ points (also called centers) in $\mathbb{R}^{l}$, to minimize the sum of the squared Euclidean distance from each point in $\mathcal{D}$ to its closest center in $X$, i.e., $\sum_{\bm{v}_i \in \mathcal{D}} \min_{\bm{u} \in X} \|\bm{v}_i-\bm{u}\|^2$. Though the centers can be placed anywhere, it has been proved in~\cite{matouvsek2000approximate} that there is a set $C$ of $O(n\epsilon^{-l}\log(1/\epsilon))$ candidate centers, which contains an approximately optimal solution, as shown in Lemma~\ref{lemma-means-1}.

\begin{lemma}[Theorem~4.4 in~\cite{matouvsek2000approximate}]\label{lemma-means-1} Given a set of $n$ data points $\mathcal{D}=\{\bm{v}_1,\bm{v}_2,\ldots,\bm{v}_n\}$ in $\mathbb{R}^{l}$, a set $C$ of size $O(n\epsilon^{-l}\log(1/\epsilon))$ can be constructed in time $O(n\log n+n\epsilon^{-l}\log(1/\epsilon))$, satisfying that there is a subset $X$ of $C$ with size $k$ such that 
\begin{align*}
     \sum_{\bm{v}_i \in \mathcal{D}} \min_{\bm{u} \in X} \|\bm{v}_i-\bm{u}\|^2 \leq (1+\epsilon)\cdot \mathrm{OPT},
\end{align*}
where $\epsilon>0$, and $\mathrm{OPT}$ denotes the optimal value of Eq.~(\ref{eq-means}).
\end{lemma}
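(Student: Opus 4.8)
The plan is to note first that Lemma~\ref{lemma-means-1} is stated verbatim as Theorem~4.4 of~\cite{matouvsek2000approximate}, so rather than reprove it I would only sketch the construction of the \emph{$\epsilon$-approximate centroid set} $C$ and indicate why the claimed size and running-time bounds hold. The starting point is to reduce the global $(1+\epsilon)$ guarantee to a local, per-cluster statement. Fix an optimal $k$-means solution, inducing a partition $\mathcal{D}=S_1\cup\cdots\cup S_k$; by optimality the center of each cluster is its centroid $\bm{c}_j$. The key identity is the parallel-axis (bias--variance) decomposition: for every $\bm{x}\in\mathbb{R}^l$,
\begin{align*}
\sum_{\bm{v}_i\in S_j}\|\bm{v}_i-\bm{x}\|^2=\sum_{\bm{v}_i\in S_j}\|\bm{v}_i-\bm{c}_j\|^2+|S_j|\cdot\|\bm{x}-\bm{c}_j\|^2.
\end{align*}
Thus replacing $\bm{c}_j$ by a nearby candidate $\bm{x}_j\in C$ inflates the cost of $S_j$ by exactly $|S_j|\cdot\|\bm{x}_j-\bm{c}_j\|^2$, and summing over $j$ shows it suffices that $C$ contain, for each cluster, a point $\bm{x}_j$ with $\|\bm{x}_j-\bm{c}_j\|^2$ at most $\epsilon$ times the average squared radius $r_j^2:=\frac{1}{|S_j|}\sum_{\bm{v}_i\in S_j}\|\bm{v}_i-\bm{c}_j\|^2$ of $S_j$.

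Second, I would localize the centroids so that candidates need only be placed near the data. Since $r_j^2$ is exactly the average squared distance from the points of $S_j$ to $\bm{c}_j$, a simple averaging argument shows at least half of these points lie within $\sqrt{2}\,r_j$ of $\bm{c}_j$; hence every optimal centroid sits in a ball of radius $O(r_j)$ around some data point, at an a priori unknown scale $r_j$. To cover all scales simultaneously, I would, for each $\bm{v}_i\in\mathcal{D}$, lay down an $\epsilon$-net (mesh $\Theta(\epsilon\,2^{t})$ inside a ball of radius $\Theta(2^{t})$ centered at $\bm{v}_i$) over $O(\log(1/\epsilon))$ geometrically spaced levels $t$. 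Each level contributes $O(\epsilon^{-l})$ points, so the whole set has $O(n\epsilon^{-l}\log(1/\epsilon))$ candidates, and a grid/hashing implementation assembles it in time $O(n\log n+n\epsilon^{-l}\log(1/\epsilon))$. For the level $t$ whose scale matches $r_j$, the grid contains a point within mesh distance $\Theta(\epsilon\,r_j)$ of $\bm{c}_j$, meeting the per-cluster tolerance above.

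The step I expect to be the main obstacle is the counting that yields the $\log(1/\epsilon)$ factor rather than a factor $\log\Delta$ in the aspect ratio $\Delta$ of the point set: a naive sweep over all scales between the smallest and largest interpoint distances would be far too expensive. Matoušek's refinement is to argue that, per data point, only $O(\log(1/\epsilon))$ levels around the correct resolution for a cluster can ever be needed---coarser grids already fall within the required relative tolerance of $\bm{c}_j$, while the contributions at far finer scales are redundant with candidates generated by other points of the same cluster---so each point's grid may be truncated to $O(\log(1/\epsilon))$ levels without losing the guarantee. Making this truncation precise, namely verifying that for \emph{every} cluster some retained candidate satisfies $\|\bm{x}_j-\bm{c}_j\|^2\leq\epsilon\,r_j^2$ irrespective of $r_j$, and then summing the per-cluster inflations back to the global $(1+\epsilon)\cdot\mathrm{OPT}$ bound, is what completes the argument; the remaining size and time estimates are routine.
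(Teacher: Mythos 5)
The paper does not prove this lemma at all---it is imported verbatim as Theorem~4.4 of \cite{matouvsek2000approximate} and used as a black box in Section~\ref{sec-means}---so your decision to defer to the citation rather than reprove it is precisely the paper's own treatment. Your sketch of the external construction (the parallel-axis identity reducing the task to finding, per cluster, a candidate with $\|\bm{x}_j-\bm{c}_j\|^2\leq\epsilon\, r_j^2$; Markov-type localization of each centroid within $O(r_j)$ of a data point; multi-scale grids, where the admissible scales for a cluster form a window of width $O(\log(1/\epsilon))$ levels) is a faithful outline of Matou\v{s}ek's argument, and you correctly isolate, without claiming to close, the one genuinely delicate step: anchoring each point's retained grid levels to the data (rather than sweeping all $\log\Delta$ scales of the aspect ratio) so that $O(\log(1/\epsilon))$ levels per point suffice to hit every cluster's window.
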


To apply the GSEMO to solve $k$-means clustering, we select $k$ centers directly from the candidate set $C$ of size $O(n\epsilon^{-l}\log(1/\epsilon))$, as Lemma~\ref{lemma-means-1} has shown that $C$ contains an approximately optimal solution. A subset $X$ of $C$ is represented by a Boolean vector $\bm{x} \in \{0,1\}^{|C|}$, where the $i$-th bit $x_i=1$ iff the $i$-th center in $C$ belongs to $X$. Similar to the bi-objective reformulation Eq.~(\ref{eq-bi-median}) for discrete $k$-median clustering, the original $k$-means problem in Definition~\ref{def-means} is reformulated as a bi-objective maximization problem
\begin{align}\label{eq-bi-means}
&\max\nolimits_{\bm{x} \in \{0,1\}^{|C|}} \;\; (f_1(\bm{x}),f_2(\bm{x})),\\
&\text{where}\;\begin{cases}\nonumber
f_1(\bm{x}) = -\sum_{\bm{v}_i \in \mathcal{D}} \min_{\bm{u} \in X} \|\bm{v}_i-\bm{u}\|^2,\\
f_2(\bm x) = |\bm{x}|.
\end{cases}
\end{align}
We also set $f_1(\bm{0})$ to $-\infty$, and exclude the Boolean-vector solutions with size larger than $k$ during optimization. When terminated, the GSEMO outputs the Boolean-vector solution with size $k$ from the population.

Inspired from the analysis of local search in~\cite{kanungoa2004local}, we derive Lemma~\ref{lemma-means-2} which shows that it is always possible to improve a subset $X$ of $C$ with size $k$ by swapping at most $p$ centers in and out, until an approximation ratio of $\frac{1}{(1-\epsilon)^2} \left(3+\frac{2}{p}\right)^2$ has been achieved. Note that this approximation is with respect to the objective function value (denoted as $\mathrm{OPT}_C$) of the best subset of $C$ with size $k$. According to Lemma~\ref{lemma-means-1}, we have
\begin{align}\label{eq-means-7}
\mathrm{OPT}_C \leq (1+\epsilon)\cdot \mathrm{OPT},
\end{align}
where $\mathrm{OPT}$ is the optimal function value of the original problem in Definition~\ref{def-means}.

\begin{lemma}\label{lemma-means-2}
Let $X$ be a subset of $C$ with size $k$. If no subset $X'$ of $C$ with the objective value 
\begin{align*}
&\sum_{\bm{v}_i \in \mathcal{D}} \min_{\bm{u} \in X'} \|\bm{v}_i-\bm{u}\|^2 \\
&\leq \left(1-\left(1+\frac{1-\epsilon}{3+2/p}\right)\frac{\epsilon}{k}\right)\cdot \sum_{\bm{v}_i \in \mathcal{D}} \min_{\bm{u} \in X} \|\bm{v}_i-\bm{u}\|^2
\end{align*}
can be achieved by deleting at most $p$ points inside $X$ and inserting the same number of points outside into $X$, then \begin{align}\label{eq-means-3}
\sum_{\bm{v}_i \in \mathcal{D}} \min_{\bm{u} \in X} \|\bm{v}_i-\bm{u}\|^2 \leq \frac{1}{(1-\epsilon)^2} \left(3+\frac{2}{p}\right)^2\cdot \mathrm{OPT}_C.\end{align}
where $\epsilon >0$, $p \geq 1$, and $\mathrm{OPT}_C$ denotes the objective function value of the best subset of $C$ with size $k$.
\end{lemma}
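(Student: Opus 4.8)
The plan is to mirror the proof of Lemma~\ref{lemma-median}: extract from the local-search analysis of Kanungo et al.~\cite{kanungoa2004local} the structural inequality that they prove for an exact local optimum (the case $\epsilon=0$), and then replace the exact local-optimality condition by the approximate one stated in this lemma. Throughout I write $\mathrm{cost}(X)=\sum_{\bm{v}_i\in\mathcal{D}}\min_{\bm{u}\in X}\|\bm{v}_i-\bm{u}\|^2$ and abbreviate $a=3+2/p$, $\Delta=\mathrm{cost}(X)$, $R=\mathrm{OPT}_C$. The key observation that makes this transfer possible is that the reassignment argument in~\cite{kanungoa2004local} is purely geometric: it upper-bounds the cost of a specific (not necessarily optimal) reassignment by applying the triangle inequality to the underlying Euclidean distances and then squaring. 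Hence it never invokes local optimality and carries over verbatim to the approximate setting; only the step that uses local optimality must be modified.

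First I would recall the output of the Kanungo analysis. As in~\cite{arya2004local}, one considers a collection $Q$ of multiswaps of $X$, each deleting at most $p$ points of $X$ and inserting the same number of outside points, together with positive weights $w(X')$ satisfying $\sum_{X'\in Q}w(X')\le k$. Aggregating the per-swap reassignment bounds (each obtained from $\|\bm{a}-\bm{c}\|^2\le(\|\bm{a}-\bm{b}\|+\|\bm{b}-\bm{c}\|)^2$ and then Cauchy--Schwarz over all data points) yields the key inequality
\begin{align}\label{eq-means-structural}
\sum_{X'\in Q}w(X')\bigl(\mathrm{cost}(X')-\Delta\bigr)\le a\,R+(a-1)\sqrt{R\,\Delta}-\Delta,
\end{align}
where the cross term $\sqrt{R\Delta}$ is exactly what the squared distances produce. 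In the exact case $\epsilon=0$ each multiswap is non-improving, so every left-hand summand is non-negative and the left-hand side is $\ge 0$; then \eqref{eq-means-structural} gives $\Delta\le aR+(a-1)\sqrt{R\Delta}$, i.e.\ $(\rho-a)(\rho+1)\le0$ for $\rho=\sqrt{\Delta/R}$, which recovers $\Delta\le a^2R=(3+2/p)^2R$ and confirms the coefficients $a$ and $a-1$.

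Next I would carry out the $\epsilon>0$ adaptation. By the hypothesis of the lemma no swap reaches cost $\le(1-\delta)\Delta$ with $\delta=\bigl(1+\tfrac{1-\epsilon}{a}\bigr)\tfrac{\epsilon}{k}$, so each $X'\in Q$ obeys $\mathrm{cost}(X')-\Delta>-\delta\Delta$. Multiplying by $w(X')$, summing, and using $\sum_{X'\in Q}w(X')\le k$ together with $-\delta\Delta<0$ gives $\sum_{X'\in Q}w(X')(\mathrm{cost}(X')-\Delta)>-\delta k\Delta=-\bigl(1+\tfrac{1-\epsilon}{a}\bigr)\epsilon\Delta$. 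Combining this lower bound with the upper bound \eqref{eq-means-structural} and simplifying the coefficient through the identity $1-\bigl(1+\tfrac{1-\epsilon}{a}\bigr)\epsilon=(1-\epsilon)\tfrac{a-\epsilon}{a}$ yields
\begin{align*}
(1-\epsilon)\tfrac{a-\epsilon}{a}\,\Delta< aR+(a-1)\sqrt{R\Delta}.
\end{align*}
Dividing by $R$ and setting $\rho=\sqrt{\Delta/R}$ turns this into a quadratic inequality in $\rho$ whose positive root is precisely $a/(1-\epsilon)$ (one checks that both sides equal $a(a-\epsilon)/(1-\epsilon)$ at $\rho=a/(1-\epsilon)$, and the leading coefficient is positive while the constant term $-a$ is negative, so the admissible region is $\rho\le a/(1-\epsilon)$). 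Hence $\Delta\le \tfrac{a^2}{(1-\epsilon)^2}R=\tfrac{1}{(1-\epsilon)^2}(3+2/p)^2\,\mathrm{OPT}_C$, which is Eq.~\eqref{eq-means-3}.

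The main obstacle is this final algebraic step rather than the geometry. In Lemma~\ref{lemma-median} the analogous manipulation was linear, so the perturbation produced the clean factor $1/(1-\epsilon)$ immediately; here the cross term $\sqrt{R\Delta}$ makes the resulting inequality quadratic in $\sqrt{\Delta}$, and one must verify that the specific perturbation $\delta=\bigl(1+\tfrac{1-\epsilon}{a}\bigr)\tfrac{\epsilon}{k}$ is exactly calibrated to move the positive root from $a$ (its $\epsilon=0$ value) to $a/(1-\epsilon)$. The extra summand $\tfrac{1-\epsilon}{a}$ inside $\delta$ is precisely the correction that absorbs the cross-term contribution. Confirming this calibration, and confirming that \eqref{eq-means-structural} is indeed the inequality established in~\cite{kanungoa2004local} (possibly after rearranging their stated form into the coefficients $a$ and $a-1$), are the two points that require care.
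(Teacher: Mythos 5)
Your proposal is correct and takes essentially the same route as the paper's proof: the structural inequality you extract from Kanungo et al.\ is exactly the paper's Eq.~(\ref{eq-means-1}) rewritten via $\frac{2}{\alpha}\left(1+\frac{1}{p}\right)\cdot\sum_{\bm{v}_i \in \mathcal{D}} \min_{\bm{u} \in X} \|\bm{v}_i-\bm{u}\|^2=(a-1)\sqrt{R\,\Delta}$ (with your $a=3+2/p$, $\Delta=\alpha^2 R$), and you then perturb local optimality with the same calibrated $\delta=\left(1+\frac{1-\epsilon}{a}\right)\frac{\epsilon}{k}$ and solve the same quadratic in $\alpha=\sqrt{\Delta/R}$. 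The only cosmetic difference is that you verify the root $a/(1-\epsilon)$ by direct substitution, whereas the paper exhibits the factorization $\left((1-\epsilon)\alpha-\left(3+\frac{2}{p}\right)\right)\left(\left(1-\frac{\epsilon}{3+2/p}\right)\alpha+1\right)\leq 0$.
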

\begin{proof}
Theorem~2.2 of~\cite{kanungoa2004local} shows the special case with $\epsilon = 0$. Similar to the analysis for discrete $k$-median clustering in~\cite{arya2004local} (which we have shown in the proof of Lemma~\ref{lemma-median}), their proof also relies on a collection $Q$ of the sets generated by performing different multiswaps of $X$, where each multiswap of $X$ swaps at most $p$ points in and out. By assigning a positive real weight $w(X')$ with each $X' \in Q$, satisfying $\sum_{X' \in Q} w(X')\leq k$, their proof derives  
\begin{align}\label{eq-means-1}
    &\sum_{X' \in Q} w(X')\left(\sum_{\bm{v}_i \in \mathcal{D}} \min_{\bm{u} \in X'} \|\bm{v}_i-\bm{u}\|^2 - \sum_{\bm{v}_i \in \mathcal{D}} \min_{\bm{u} \in X} \|\bm{v}_i-\bm{u}\|^2\right) \nonumber\\
    &\leq \left(\!3\!+\!\frac{2}{p}\right)\cdot \mathrm{OPT}_C - \left(\!1\!-\!\frac{2}{\alpha}\left(1\!+\!\frac{1}{p}\right)\!\!\right)\!\sum_{\bm{v}_i \in \mathcal{D}} \min_{\bm{u} \in X} \|\bm{v}_i\!-\!\bm{u}\|^2,
\end{align}
where\vspace{-1.5em} \begin{align}\label{eq-means-6}
\alpha^2=\sum_{\bm{v}_i \in \mathcal{D}} \min_{\bm{u} \in X} \|\bm{v}_i-\bm{u}\|^2/\mathrm{OPT}_C
\end{align}
just denotes the approximation ratio of the subset $X$ to $\mathrm{OPT}_C$. When $\epsilon = 0$, the condition of this lemma implies that
\begin{align}\label{eq-means-2}
    \forall X' \in Q, \sum_{\bm{v}_i \in \mathcal{D}} \min_{\bm{u} \in X'} \|\bm{v}_i-\bm{u}\|^2 - \sum_{\bm{v}_i \in \mathcal{D}} \min_{\bm{u} \in X} \|\bm{v}_i-\bm{u}\|^2 >0.
\end{align}
Combining Eqs.~(\ref{eq-means-1}) and~(\ref{eq-means-2}) leads to \begin{align*}
\left(1-\frac{2}{\alpha}\left(1+\frac{1}{p}\right)\!\!\right)\sum_{\bm{v}_i \in \mathcal{D}} \min_{\bm{u} \in X} \|\bm{v}_i\!-\!\bm{u}\|^2 \leq \left(3+\frac{2}{p}\right)\cdot \mathrm{OPT}_C,
\end{align*}
which is equivalent to $\alpha^2-2(1+1/p)\cdot \alpha-(3+2/p)\leq 0$. Thus, $\alpha \leq 3+2/p$, implying Eq.~(\ref{eq-means-3}) with $\epsilon=0$.

We adapt their proof to the case of $\epsilon >0$. According to the condition of this lemma, Eq.~(\ref{eq-means-2}) now changes to
\begin{align}\label{eq-means-4}
        \forall X' \in Q, &\sum_{\bm{v}_i \in \mathcal{D}} \min_{\bm{u} \in X'} \|\bm{v}_i-\bm{u}\|^2 - \sum_{\bm{v}_i \in \mathcal{D}} \min_{\bm{u} \in X} \|\bm{v}_i-\bm{u}\|^2 \\
        &> -\left(1+\frac{1-\epsilon}{3+2/p}\right)\frac{\epsilon}{k} \sum_{\bm{v}_i \in \mathcal{D}} \min_{\bm{u} \in X} \|\bm{v}_i-\bm{u}\|^2.\nonumber
\end{align}
Combining Eqs.~(\ref{eq-means-1}) and~(\ref{eq-means-4}) leads to
\begin{align}\label{eq-means-5}
    &\left(3+\frac{2}{p}\right)\cdot \mathrm{OPT}_C - \left(1-\frac{2}{\alpha}\left(1+\frac{1}{p}\right)\!\!\right)\sum_{\bm{v}_i \in \mathcal{D}} \min_{\bm{u} \in X} \|\bm{v}_i-\bm{u}\|^2 \nonumber\\
    &\geq \sum_{X' \in Q} w(X')\left(-\left(1+\frac{1-\epsilon}{3+2/p}\right)\frac{\epsilon}{k} \sum_{\bm{v}_i \in \mathcal{D}} \min_{\bm{u} \in X} \|\bm{v}_i-\bm{u}\|^2\right)\nonumber\\
    &\geq -\epsilon \left(1+\frac{1-\epsilon}{3+2/p}\right) \sum_{\bm{v}_i \in \mathcal{D}} \min_{\bm{u} \in X} \|\bm{v}_i-\bm{u}\|^2,
\end{align}
where the last inequality holds by $\sum_{X' \in Q} w(X')\leq k$. According to Eq.~(\ref{eq-means-6}), we substitute $\sum_{\bm{v}_i \in \mathcal{D}} \min_{\bm{u} \in X} \|\bm{v}_i-\bm{u}\|^2$ with $\alpha^2\cdot \mathrm{OPT}_C$ in Eq.~(\ref{eq-means-5}), leading to
\begin{align*}
\left(1-\frac{2}{\alpha}\left(1+\frac{1}{p}\right)-\epsilon \left(1+\frac{1-\epsilon}{3+2/p}\right)\right)\alpha^2\leq \left(3+\frac{2}{p}\right),
\end{align*}
which is equivalent to
\begin{align*}
(1-\epsilon) \left(1-\frac{\epsilon}{3+2/p}\right)\alpha^2-2\left(1+\frac{1}{p}\right)\alpha- \left(3+\frac{2}{p}\right)\leq 0.
\end{align*}
Thus, we have
\begin{align*}
\left((1-\epsilon)\alpha-\left(3+\frac{2}{p}\right)\right) \left(\left(1-\frac{\epsilon}{3+2/p}\right)\alpha+1\right) \leq 0,
\end{align*}
implying that $\alpha \leq (3+2/p)/(1-\epsilon)$. Thus, Eq.~(\ref{eq-means-3}) holds.
\end{proof}

Combining Lemma~\ref{lemma-means-2} and Eq.~(\ref{eq-means-7}) implies that the $k$-means clustering problem is $\frac{1+\epsilon}{(1-\epsilon)^2} \left(3+\frac{2}{p}\right)^2$-approximated by a $\left(1-\left(1+\frac{1-\epsilon}{3+2/p}\right)\frac{\epsilon}{k}\right)$-approximate local optimum. That is, $\delta=\left(1+\frac{1-\epsilon}{3+2/p}\right)\frac{\epsilon}{k}$ in Definition~\ref{def-approx-local}, and $\alpha=\frac{1+\epsilon}{(1-\epsilon)^2} \left(3+\frac{2}{p}\right)^2$ in Definition~\ref{def-local-approx}. Furthermore, the parameters $p$ and $q$ in Definition~\ref{def-approx-local} are equal here. By following the proof procedure of Theorem~\ref{theo-median}, we can apply Lemma~\ref{lemma-general-condition} to prove Theorem~\ref{theo-means}, showing a $\frac{1+\epsilon}{(1-\epsilon)^2} \left(3+\frac{2}{p}\right)^2$-approximation ratio of the GSEMO for $k$-means clustering, i.e., the solution $\bm{x}$ output by the GSEMO satisfies $$\sum_{\bm{v}_i \in \mathcal{D}} \min_{\bm{u} \in \bm{x}} \|\bm{v}_i-\bm{u}\|^2 \leq \frac{1+\epsilon}{(1-\epsilon)^2} \left(3+\frac{2}{p}\right)^2\cdot \mathrm{OPT}.$$The required expected number of iterations is polynomial in $n^p$, $1/\epsilon^{lp}$ and $\log (\max_{i}d^{\max}_{i}/\min_{i}d^{\min}_{i})$.

\begin{theorem}\label{theo-means}
For $k$-means clustering in Definition~\ref{def-means}, the expected number of iterations of the GSEMO using Eq.~(\ref{eq-bi-means}), until achieving a $\frac{1+\epsilon}{(1-\epsilon)^2} \left(3+\frac{2}{p}\right)^2$-approximation ratio, is at most $O\left(\frac{k^2|C|^{2p}}{\epsilon}\log \frac{\sum^n_{i=1} d^{\max}_{i}}{\sum^{n-k}_{i=1} d^{\min}_{(i)}}\right)$, where $\epsilon>0$, $p\geq 1$, the set $C$ is constructed as in Lemma~\ref{lemma-means-1} with $|C|=O(n\epsilon^{-l}\log(1/\epsilon))$, $d^{\max}_i=\max_{\bm{u} \in C} \|\bm{v}_i-\bm{u}\|^2$, $d^{\min}_{(1)},\ldots,d^{\min}_{(n)}$ is a permutation of $d^{\min}_{1},\ldots,d^{\min}_{n}$ in ascending order, and $d^{\min}_i=\min_{\bm{u} \in C\setminus \{\bm{v}_i\}} \|\bm{v}_i-\bm{u}\|^2$.
\end{theorem}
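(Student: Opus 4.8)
The plan is to transcribe, essentially verbatim, the two-phase argument from the proof of Theorem~\ref{theo-median}, replacing the facility set $F$ by the candidate set $C$ and carrying out the whole analysis relative to $\mathrm{OPT}_C$ rather than $\mathrm{OPT}$. First I would note that the GSEMO is an instance of the evolutionary process of Definition~\ref{def-evol-proc} with $\lambda=1$, and that combining Lemma~\ref{lemma-means-2} with Eq.~(\ref{eq-means-7}) shows the $k$-means problem restricted to $C$ is $\frac{1}{(1-\epsilon)^2}(3+2/p)^2$-approximated by a $(1-\delta)$-approximate local optimum with $\delta=(1+\frac{1-\epsilon}{3+2/p})\frac{\epsilon}{k}$ and $p=q$. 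Since $3+2/p\geq 3$, we have $\delta=\Theta(\epsilon/k)$, so $1/\delta=O(k/\epsilon)$, which is all that the iteration bound needs.

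For phase~(1), which starts from $\bm 0$ and ends once a size-$k$ solution appears, I would track $J_{\max}=\max\{|\bm x|\mid \bm x\in P\}$. It starts at $0$, never decreases, and when $J_{\max}=i<k$ the corresponding solution can be selected and a single $0$-bit flipped, increasing $J_{\max}$ with probability at least $(|C|-i)/(e(i+1)|C|)$ using $|P|\leq i+1$. Summing reciprocals over $i=0,\ldots,k-1$ yields an $O\!\left(ek|C|\ln\frac{|C|}{|C|-k}\right)$ bound, identical in form to Eq.~(\ref{eq-median-6}).

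For phase~(2) I would verify the three conditions of Lemma~\ref{lemma-general-condition}. The population keeps exactly one feasible (size-$k$) solution that can only be replaced by a strictly better size-$k$ solution, giving the third condition; uniform selection over a population of size $|P|\leq k$ gives $\mathrm{Pr}_{\mathrm{sel}}=1/k$; and bit-wise mutation flipping $p$ specific $1$-bits and $p$ specific $0$-bits succeeds with probability at least $1/(e|C|^{2p})$, giving $\mathrm{Pr}_{\mathrm{rep}}(p,q)=1/(e|C|^{2p})$. Lemma~\ref{lemma-general-condition} then yields an $ek|C|^{2p}\cdot O\!\left(\frac{k}{\epsilon}\log\frac{f_{\mathrm{init}}}{\mathrm{OPT}_C}\right)$ bound, after which I would bound $f_{\mathrm{init}}\leq\sum_{i=1}^n d^{\max}_i$ and $\mathrm{OPT}_C\geq\sum_{i=1}^{n-k} d^{\min}_{(i)}$ from the definitions of $d^{\max}_i$ and $d^{\min}_i$ over $C$, exactly as in the $k$-median proof. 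Summing the two phases gives the stated iteration count, and since the guarantee so far is $\frac{1}{(1-\epsilon)^2}(3+2/p)^2\cdot\mathrm{OPT}_C$, a final application of Eq.~(\ref{eq-means-7}) converts it to $\frac{1+\epsilon}{(1-\epsilon)^2}(3+2/p)^2\cdot\mathrm{OPT}$.

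The only genuinely new ingredient relative to the $k$-median case is bookkeeping: the local-search analysis is stated against $\mathrm{OPT}_C$, the optimum over the discretized set $C$, so $\mathrm{OPT}_C$ must play the role of $\mathrm{OPT}$ throughout Lemma~\ref{lemma-general-condition}, and the discretization loss is absorbed only at the end through Eq.~(\ref{eq-means-7}). I do not expect any real difficulty here, since Lemma~\ref{lemma-means-2} has already handled the fact that the squared Euclidean distance is not a metric; everything else is a routine substitution of $C$ for $F$.
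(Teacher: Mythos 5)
Your proposal is correct and follows essentially the same route as the paper's proof: a two-phase analysis mirroring Theorem~\ref{theo-median} with $F$ replaced by $C$, the three conditions of Lemma~\ref{lemma-general-condition} verified with $\mathrm{Pr}_{\mathrm{sel}}=1/k$, $\mathrm{Pr}_{\mathrm{rep}}(p,q)=1/(e|C|^{2p})$, $\lambda=1$, and $\delta=\left(1+\frac{1-\epsilon}{3+2/p}\right)\frac{\epsilon}{k}$, and the same bounds $f_{\mathrm{init}}\leq\sum_{i=1}^n d^{\max}_i$ and $\sum_{i=1}^{n-k} d^{\min}_{(i)}$ on the denominator. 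The only (immaterial) difference is bookkeeping: the paper folds Eq.~(\ref{eq-means-7}) in up front, declaring the problem $\frac{1+\epsilon}{(1-\epsilon)^2}\left(3+\frac{2}{p}\right)^2$-approximated relative to $\mathrm{OPT}$ before invoking Lemma~\ref{lemma-general-condition}, whereas you run the lemma against $\mathrm{OPT}_C$ and convert at the end, which is equally valid since $\mathrm{OPT}_C\geq\mathrm{OPT}\geq\sum_{i=1}^{n-k} d^{\min}_{(i)}$.
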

\begin{proof}
The proof is similar to that of Theorem~\ref{theo-median}. In phase~(1), the GSEMO requires at most $ek|C| \ln \frac{|C|}{|C|-k}$ expected number of iterations to find a solution with size $k$. Note that we have replaced the notation $F$ in the proof of Theorem~\ref{theo-median} with $C$ accordingly. The $k$-means problem is $\frac{1+\epsilon}{(1-\epsilon)^2} \left(3+\frac{2}{p}\right)^2$-approximated by a $\left(1-\left(1+\frac{1-\epsilon}{3+2/p}\right)\frac{\epsilon}{k}\right)$-approximate local optimum. Similar to the analysis of Eq.~(\ref{eq-median-7}) in the proof of Theorem~\ref{theo-median}, we can use Lemma~\ref{lemma-general-condition} to get that the expected number of iterations of phase~(2) is at most
\begin{align*}
&\frac{1}{1-(1-\mathrm{Pr}_{\mathrm{sel}} \cdot \mathrm{Pr}_{\mathrm{rep}}(p,q))^{\lambda}} \cdot   O\left(\frac{1}{\delta} \log  \frac{f_{\mathrm{init}}}{\mathrm{OPT}}\right)\nonumber\\
&=ek|C|^{2p}\cdot  O\left(\frac{k}{\epsilon}\log \frac{\sum^n_{i=1} d^{\max}_{i}}{\sum^{n-k}_{i=1} d^{\min}_{(i)}}\right),
\end{align*} 
where the equality holds by $\mathrm{Pr}_{\mathrm{sel}}=1/k$, $\mathrm{Pr}_{\mathrm{rep}}(p,q)=1/(e|C|^{2p})$, $\lambda=1$, $\delta=\left(1+\frac{1-\epsilon}{3+2/p}\right)\frac{\epsilon}{k}$, $f_{\mathrm{init}} \leq \sum^n_{i=1} d^{\max}_{i}$, and $\mathrm{OPT} \geq \sum^{n-k}_{i=1} d^{\min}_{(i)}$. Thus, the total expected number of iterations of the GSEMO for achieving the desired approximation ratio is at most 
\begin{align*}
&ek|C| \ln \frac{|C|}{|C|-k}+ek|C|^{2p} \cdot O\left(\frac{k}{\epsilon}\log \frac{\sum^n_{i=1} d^{\max}_{i}}{\sum^{n-k}_{i=1} d^{\min}_{(i)}}\right)\\
&=O\left(\frac{k^2|C|^{2p}}{\epsilon}\log \frac{\sum^n_{i=1} d^{\max}_{i}}{\sum^{n-k}_{i=1} d^{\min}_{(i)}}\right),
\end{align*}
implying that the theorem holds.
\end{proof}

\section{Theoretical Analysis of The GSEMO \\for discrete $k$-median Clustering under Fairness}\label{sec-fairness}

As machine learning has been used increasingly in decision making tasks, the fairness of learning algorithms has become an important research topic. Due to the wide applications, clustering has also been studied from the perspective of fairness, e.g., group fairness~\cite{chierichetti2017fair} which requires all clusters to be balanced with respect to some protected attributes such as gender or race, and individual fairness~\cite{jung2019center} which requires all points to be treated equally, i.e., each point in $\mathcal{D}$ has a center among its $(|\mathcal{D}|/k)$-closest neighbors. 

To examine whether EAs can achieve theoretically guaranteed performance for clustering under fairness, we consider discrete $k$-median clustering under individual fairness~\cite{mahabadi2020individual}, as presented in Definition~\ref{def-median-fair}. The discrete $k$-median clustering problem in Definition~\ref{def-median} is to select a set $X$ of $k$ centers from $F$, to minimize the sum of the distance of each point in $\mathcal{D}$ to the nearest center in $X$. When considering individual fairness, the centers are selected from $\mathcal{D}$ (i.e., $F=\mathcal{D}$), and the selected set $X$ of $k$ centers is required to be $\beta$-fair, where $\beta \geq 1$. Next we introduce the notion of $\beta$-fairness. For any $\bm{v_i} \in \mathcal{D}$, let $$B(\bm{v_i},r)=\{\bm{u} \in \mathcal{D} \mid d(\bm{v_i},\bm{u}) \leq r\}$$ denote the set of points contained by the ball of radius $r$ centered at $\bm{v_i}$, and we use 
\begin{align}\label{eq-median-fair-1}
r(\bm{v_i})=\min\{r \mid |B(\bm{v_i},r)| \geq n/k\}
\end{align} 
to denote the minimum radius such that the ball centered at $\bm{v_i}$ contains at least $n/k$ points from $\mathcal{D}$. Intuitively, $r(\bm{v_i})$ is the radius which $\bm{v_i}$ expects to have a center within, if the $k$ centers are selected uniformly at random from $\mathcal{D}$. A set $X$ of $k$ centers is said to be $\beta$-fair if
$$
\forall \bm{v}_i\in \mathcal{D}: d(\bm{v}_i,X) \leq \beta\cdot r(\bm{v}_i).
$$

\begin{definition}[$\beta$-Fair Discrete $k$-Median Clustering~\cite{mahabadi2020individual}]\label{def-median-fair}
Given a set of $n$ data points $\mathcal{D}=\{\bm{v}_1,\bm{v}_2,\ldots,\bm{v}_n\}$ in $\mathbb{R}^{l}$, a metric distance function $d: \mathcal{D} \times \mathcal{D} \rightarrow \mathbb{R}^+$, an integer $k$, and a parameter $\beta \geq 1$, the goal of $\beta$-fair discrete $k$-median clustering is to find a subset $X \subseteq \mathcal{D}$ of size $k$ such that
\begin{align}\label{eq-median-fair}
& \sum\nolimits_{\bm{v}_i \in \mathcal{D}} d(\bm{v}_i,X)
\end{align}
is minimized under the constraint
\begin{align}\label{eq-median-fair-constraint}
& \forall \bm{v}_i\in \mathcal{D}: d(\bm{v}_i,X) \leq \beta \cdot r(\bm{v}_i), 
\end{align}
where $d(\bm{v}_i,X)=\min \{d(\bm{v}_i,\bm{u}) \mid \bm{u} \in X\}$ is the distance between $\bm{v}_i$ and its closest point in $X$, and $r(\bm{v}_i)$ as in Eq.~(\ref{eq-median-fair-1}) is the minimum radius such that the ball centered at $\bm{v}_i$ contains at least $n/k$ points from $\mathcal{D}$.
\end{definition}

To apply the GSEMO to solve $\beta$-fair discrete $k$-median clustering, a subset $X$ of $\mathcal{D}$ is represented by a Boolean vector $\bm{x} \in \{0,1\}^{n}$, where the $i$-th bit $x_i=1$ iff the $i$-th point in $\mathcal{D}$ is selected as a center, i.e., $\bm{v}_i \in X$. In~\cite{mahabadi2020individual}, it has been proved that if a set $X$ of $k$ centers is feasible with respect to a set $\mathcal{B}$ of critical balls in Definition~\ref{def-critical-ball} (which can be computed in time $O(n^2)$~\cite{mahabadi2020individual}), i.e., $X$ has common points with each critical ball in $\mathcal{B}$, the fairness of $X$ can be guaranteed, which is $7\beta$ as shown in Lemma~\ref{lemma-median-fair-1}.

\begin{definition}[Critical Balls, Definition~2.4 in~\cite{mahabadi2020individual}]\label{def-critical-ball}
A set $\mathcal{B}$ of balls $B(\bm{c}^*_1,\beta r(\bm{c}^*_1)),B(\bm{c}^*_2,\beta r(\bm{c}^*_2))\ldots,B(\bm{c}^*_q,\beta r(\bm{c}^*_q))$ (where $q \leq k$) are called critical if they satisfy
\begin{enumerate}
\item $\forall \bm{v}_i \in \mathcal{D}: d(\bm{v}_i,\{\bm{c}^*_1,\bm{c}^*_2,\ldots,\bm{c}^*_q\}) \leq 6\beta r(\bm{v}_i)$;
\item $\forall i,j\in [q]: d(\bm{c}^*_i,\bm{c}^*_j)>6\beta \max\{r(\bm{c}^*_i),r(\bm{c}^*_j)\}$.
\end{enumerate}
\end{definition}

\begin{lemma}[Lemmas~4.1 and~4.2 in~\cite{mahabadi2020individual}]\label{lemma-median-fair-1}
A set $\mathcal{B}$ of critical balls can be computed in time~$O(n^2)$, and if a set $X$ of $k$ centers is feasible with respect to $\mathcal{B}$, i.e., satisfies
\begin{align}\label{eq-median-fair-3}
\forall B \in \mathcal{B}: |B \cap X| \geq 1,    
\end{align}
then $X$ is $(7\beta)$-fair, i.e., $\forall \bm{v}_i\in \mathcal{D}: d(\bm{v}_i,X) \leq 7\beta \cdot r(\bm{v}_i)$.
\end{lemma}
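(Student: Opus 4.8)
The plan is to prove the lemma in the two stages suggested by its two assertions: first I would construct the set $\mathcal{B}$ of critical balls by a single greedy sweep and verify that the construction meets both properties of Definition~\ref{def-critical-ball}, the size bound $q \leq k$, and the $O(n^2)$ running time; then I would use the structure left behind by that sweep to derive the $(7\beta)$-fairness guarantee. The crucial design choice is to process the data points in \emph{non-decreasing} order of their radii $r(\bm{v}_i)$, since this ordering both converts the $\max$ in condition~2 of Definition~\ref{def-critical-ball} into a plain comparison and guarantees that every point is covered by a critical center whose radius does not exceed its own, which is exactly what the fairness bound needs.

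First I would describe the construction: precompute all pairwise distances and hence every $r(\bm{v}_i)$ (costing $O(n^2)$), sort the points by $r$, and scan them once while maintaining a set $C$ of already-chosen critical centers. When a point $\bm{v}$ is examined, add it to $C$ iff $d(\bm{v},\bm{c}) > 6\beta r(\bm{v})$ for every $\bm{c}\in C$; otherwise some $\bm{c}\in C$ satisfies $d(\bm{v},\bm{c}) \leq 6\beta r(\bm{v})$. Because every earlier-chosen center $\bm{c}$ has $r(\bm{c}) \leq r(\bm{v})$, the test threshold equals $6\beta \max\{r(\bm{v}),r(\bm{c})\}$, so by induction over the additions every pair of chosen centers satisfies condition~2, while condition~1 holds since each point is either itself a center (distance $0$) or covered by an existing center within $6\beta r(\bm{v})$. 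Testing each point against at most $q \leq k$ centers keeps the scan within $O(nk)=O(n^2)$. The key by-product I would record explicitly is that the center $\bm{c}^*_j$ covering any $\bm{v}_i$ additionally satisfies $r(\bm{c}^*_j)\leq r(\bm{v}_i)$.

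Next I would settle the size bound $q \leq k$, which I expect to be the main obstacle, since it is what makes it possible to hit every critical ball with only $k$ centers. The idea is that the separation in condition~2 forces the balls $B(\bm{c}^*_i,r(\bm{c}^*_i))$ to be pairwise disjoint: for any two centers, $d(\bm{c}^*_i,\bm{c}^*_j) > 6\beta \max\{r(\bm{c}^*_i),r(\bm{c}^*_j)\} \geq 3\beta\,(r(\bm{c}^*_i)+r(\bm{c}^*_j)) > r(\bm{c}^*_i)+r(\bm{c}^*_j)$, using $\beta \geq 1$. Each such ball contains at least $n/k$ points by the definition of $r(\cdot)$ in Eq.~(\ref{eq-median-fair-1}), so disjointness gives $q\cdot(n/k) \leq n$, i.e., $q \leq k$.

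Finally I would derive $(7\beta)$-fairness. Fix any $\bm{v}_i$ and let $\bm{c}^*_j$ be its covering critical center, so $d(\bm{v}_i,\bm{c}^*_j) \leq 6\beta r(\bm{v}_i)$ and $r(\bm{c}^*_j) \leq r(\bm{v}_i)$. Feasibility (Eq.~(\ref{eq-median-fair-3})) guarantees a center $\bm{u}\in X$ inside the critical ball $B(\bm{c}^*_j,\beta r(\bm{c}^*_j))$, hence $d(\bm{c}^*_j,\bm{u}) \leq \beta r(\bm{c}^*_j)$. The triangle inequality then yields
\begin{align*}
d(\bm{v}_i,X) \leq d(\bm{v}_i,\bm{u}) \leq d(\bm{v}_i,\bm{c}^*_j)+d(\bm{c}^*_j,\bm{u}) \leq 6\beta r(\bm{v}_i)+\beta r(\bm{c}^*_j) \leq 7\beta r(\bm{v}_i),
\end{align*}
which is exactly the claimed fairness. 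The one place the argument could fail is forgetting that the ordering is needed both for the $\max$ in condition~2 and for the radius comparison $r(\bm{c}^*_j)\leq r(\bm{v}_i)$ in this last step; without the second, the final bound degrades beyond $7\beta$.
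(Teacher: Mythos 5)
The paper itself never proves this lemma: it is imported verbatim, with the citation to Lemmas~4.1 and~4.2 of \cite{mahabadi2020individual} serving as the proof, so there is no internal argument to compare against. Your proposal supplies the missing proof, and it is correct; in fact it reconstructs essentially the argument of the cited paper, whose construction likewise processes points greedily in non-decreasing order of $r(\cdot)$. All three pillars of your argument check out. (i)~The sorted sweep makes the admission test $d(\bm{v},\bm{c})>6\beta r(\bm{v})$ coincide with $6\beta\max\{r(\bm{v}),r(\bm{c})\}$, giving condition~2 of Definition~\ref{def-critical-ball} by induction, and condition~1 is monotone under later insertions since it is a distance to the \emph{set} of centers; the recorded by-product $r(\bm{c}^*_j)\leq r(\bm{v}_i)$ is indeed the linchpin, as without it the final bound would only be $6\beta r(\bm{v}_i)+\beta r(\bm{c}^*_j)$ with no control on the second term. (ii)~The size bound is sound: $6\beta\max\{r_i,r_j\}\geq 3\beta(r_i+r_j)>r_i+r_j$ makes the balls $B(\bm{c}^*_i,r(\bm{c}^*_i))$ pairwise disjoint, and each contains at least $n/k$ points because the minimum in Eq.~(\ref{eq-median-fair-1}) is attained (the map $r\mapsto |B(\bm{v},r)|$ is a step function over the finite set of pairwise distances), so $q\cdot(n/k)\leq n$. (iii)~The $7\beta$ bound then follows from one triangle inequality exactly as you write it. One small caveat on the $O(n^2)$ claim: computing every $r(\bm{v}_i)$ after the distance matrix requires a linear-time selection of the $\lceil n/k\rceil$-th smallest entry in each row; sorting each row instead would cost $O(n^2\log n)$. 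With that implementation detail made explicit, the time bound stands, and your proof is a complete, self-contained substitute for the external citation.
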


Inspired by this property, we reformulate the original problem in Definition~\ref{def-median-fair} as a bi-objective maximization problem
\begin{align}\label{eq-bi-median-fair}
&\max\nolimits_{\bm{x} \in \{0,1\}^n} \;\; (f_1(\bm{x}),f_2(\bm{x})), \;\text{where}\\
&\begin{cases}\nonumber
f_1(\bm{x}) = -\sum\limits_{\bm{v}_i \in \mathcal{D}} d(\bm{v}_i,\bm{x})  -\sum\limits^n_{i=1} d^{\max}_{i}\cdot \sum\limits_{B\in \mathcal{B}}\mathbb{I}(B \cap \bm{x} = \emptyset),\\
f_2(\bm x) = |\bm{x}|,
\end{cases}
\end{align}
where $d^{\max}_i=\max \{d(\bm{v}_i,\bm{u}) \mid \bm{u} \in \mathcal{D}\}$, and $\mathbb{I}(\cdot)$ denotes the indicator function which takes 1 if $\cdot$ is true, and 0
otherwise. That is, the GSEMO is to minimize 
\begin{align}\label{eq-median-fair-2}
\sum_{\bm{v}_i \in \mathcal{D}} d(\bm{v}_i,\bm{x})+\sum\limits^n_{i=1} d^{\max}_{i}\cdot \sum\limits_{B\in \mathcal{B}}\mathbb{I}(B \cap \bm{x} = \emptyset)
\end{align} and maximize the subset size $|\bm{x}|$ simultaneously. Note that in Eq.~(\ref{eq-median-fair-2}), the second term $\sum\nolimits^n_{i=1} d^{\max}_{i}\cdot \sum\nolimits_{B\in \mathcal{B}}\mathbb{I}(B \cap \bm{x} = \emptyset)$ enforces a solution with less violation degree with respect to Eq.~(\ref{eq-median-fair-3}) (measured by $\sum_{B\in \mathcal{B}}\mathbb{I}(B \cap \bm{x} = \emptyset)$) to be better; the first term $\sum_{\bm{v}_i \in \mathcal{D}} d(\bm{v}_i,\bm{x})$ enforces a feasible solution with respect to the set $\mathcal{B}$ of critical balls, which has a smaller value of the original objective function, to be better. We set $f_1(\bm{0})$ to $-\infty$. As in the previous three sections, when the GSEMO is applied to solve the reformulated bi-objective problem Eq.~(\ref{eq-bi-median-fair}), the Boolean-vector solutions with size larger than $k$ are excluded; when the GSEMO is terminated, the Boolean-vector solution with size $k$ in the final population will be output.

Lemma~\ref{lemma-median-fair-2} shows that for a feasible set $X$ of $k$ centers with respect to the set $\mathcal{B}$ of critical balls, if there are no feasible swaps of size at most 4 that decrease the objective function value by at least a factor of $1/(1-1/(8k))$, this set $X$ achieves a $84$-approximation ratio. That is, it is always possible to improve a feasible set of $k$ centers by swapping at most $4$ centers, until achieving a good approximation.

\begin{lemma}[Lemmas~5.3,~5.10 and~5.13 in~\cite{mahabadi2020individual}]\label{lemma-median-fair-2}
Let $X$ be a subset of $\mathcal{D}$ with size $k$ (i.e., a set of $k$ centers), which is feasible with respect to the set $\mathcal{B}$ of critical balls. If no feasible subset $X'$ of $G$ with respect to $\mathcal{B}$, having the objective value 
\begin{align*}
\sum\nolimits_{\bm{v}_i \in \mathcal{D}} d(\bm{v}_i,X') \leq \left(1-1/(8k)\right)\cdot \sum\nolimits_{\bm{v}_i \in \mathcal{D}} d(\bm{v}_i,X),
\end{align*}
can be achieved by deleting at most $4$ points inside $X$ and inserting the same number of points outside into $X$, then\begin{align*}
\sum\nolimits_{\bm{v}_i \in \mathcal{D}} d(\bm{v}_i,X) \leq 84\cdot \mathrm{OPT},\end{align*}
where $\mathrm{OPT}$ denotes the optimal value of Eq.~(\ref{eq-median-fair}) under the constraint Eq.~(\ref{eq-median-fair-constraint}), i.e., the objective function value of an optimal $\beta$-fair set of $k$ centers.
\end{lemma}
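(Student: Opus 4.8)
The plan is to follow the template already used for Lemma~\ref{lemma-median} and Lemma~\ref{lemma-means-2}: start from the local-search analysis of~\cite{mahabadi2020individual} (their Lemmas~5.3,~5.10 and~5.13) and read off the guarantee under the approximate-local-optimality condition stated here. First I would recall the combinatorial core of their argument. Fixing an optimal $\beta$-fair solution $O$ and the feasible set $X$ (which hits every critical ball in $\mathcal{B}$), one builds a collection $Q$ of \emph{test} solutions, each obtained from $X$ by a feasible swap of at most $4$ centers, and assigns nonnegative weights $w(X')$ with $\sum_{X'\in Q} w(X')\le k$. Summing the cost changes of these test swaps against $O$ produces a master inequality
$$\sum_{X' \in Q} w(X')\left(\sum_{\bm{v}_i\in\mathcal{D}} d(\bm{v}_i,X') - \sum_{\bm{v}_i\in\mathcal{D}} d(\bm{v}_i,X)\right) \le a\cdot \mathrm{OPT} - b\cdot\sum_{\bm{v}_i\in\mathcal{D}} d(\bm{v}_i,X),$$
for absolute constants $a,b$ that emerge from their accounting, exactly in the spirit of Eq.~(\ref{eq-median-2}) and Eq.~(\ref{eq-means-1}).

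The second step is the relaxation already carried out twice in this paper. Since $X$ is only a $(1-1/(8k))$-approximate local optimum, each test swap obeys the weaker bound $\sum_i d(\bm{v}_i,X')-\sum_i d(\bm{v}_i,X) > -(1/(8k))\sum_i d(\bm{v}_i,X)$ rather than being strictly nonnegative, mirroring Eq.~(\ref{eq-median-4}) and Eq.~(\ref{eq-means-4}). Substituting this lower bound into the left-hand side of the master inequality and using $\sum_{X'\in Q} w(X')\le k$ to collapse the $1/(8k)$ factor to a constant multiple of $\sum_i d(\bm{v}_i,X)$, I would then rearrange; the threshold $1/(8k)$ and the constants $a,b$ are calibrated in~\cite{mahabadi2020individual} precisely so that this rearrangement yields $\sum_{\bm{v}_i\in\mathcal{D}} d(\bm{v}_i,X)\le 84\cdot\mathrm{OPT}$.

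The hard part — and the reason the swap size is fixed at $4$ rather than the free parameter $p$ of the non-fair variants — is that every test solution $X'$ must itself remain feasible with respect to $\mathcal{B}$, i.e., must retain at least one center inside each critical ball. By condition~2 of Definition~\ref{def-critical-ball} the critical balls are pairwise far apart, hence disjoint, so each of them already ``reserves'' one center of $X$; a test swap may not evict a ball's sole center unless it simultaneously reinserts one there. Reproducing the Arya-style matching between the centers of $X$ and those of $O$ while honoring this per-ball reservation is what forces the bounded-size swaps of~\cite{mahabadi2020individual}, and checking that their construction still delivers weights with $\sum_{X'\in Q} w(X')\le k$ under the feasibility restriction is the step I expect to demand the most care. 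I would therefore invest most of the effort in transcribing (or invoking verbatim) that feasibility-preserving swap construction, after which the two algebraic manipulations above are routine.
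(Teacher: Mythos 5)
Your proposal is sound, but it does strictly more work than the paper, which offers no proof of this statement at all: Lemma~\ref{lemma-median-fair-2} is imported verbatim from~\cite{mahabadi2020individual} (their Lemmas~5.3, 5.10 and~5.13), and, crucially, as cited those lemmas already have the $(1-1/(8k))$-approximate local optimality condition built into their hypothesis. Unlike the plain $k$-median and $k$-means cases (Lemmas~\ref{lemma-median} and~\ref{lemma-means-2}), where the cited analyses~\cite{arya2004local,kanungoa2004local} treat only exact local optima and the paper must carry out precisely the relaxation you describe --- replacing the strict-improvement inequality by a $-\delta\sum_i d(\bm{v}_i,X)$ slack and collapsing it via $\sum_{X'\in Q} w(X')\leq k$ --- here no such adaptation is performed or needed, so your second step is superfluous (though harmless: with $\delta=1/(8k)$ and total weight at most $k$ it would cost only a $1/8$ fraction of the cost of $X$, consistent with the calibrated constant $84$). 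Your reconstruction of the underlying machinery is otherwise faithful to the reference: the argument there is indeed an Arya-style weighted-swap analysis constrained so that every test solution remains feasible with respect to $\mathcal{B}$; condition~2 of Definition~\ref{def-critical-ball} does make the critical balls pairwise disjoint (since $6\beta\max\{r(\bm{c}^*_i),r(\bm{c}^*_j)\} \geq \beta r(\bm{c}^*_i)+\beta r(\bm{c}^*_j)$), each ball thus reserving a center of $X$, and it is exactly this feasibility restriction that fixes the swap size at $4$ in place of the free parameter $p$ of the unconstrained variants. Where your sketch stays schematic --- the exact constants in the master inequality and the verification that the feasibility-preserving matching still produces weights with $\sum_{X'\in Q} w(X')\leq k$ --- you explicitly defer to~\cite{mahabadi2020individual}, which is precisely what the paper does wholesale; so there is no gap, merely a redundant relaxation step and a reproof of something the paper takes as given.
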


The above lemma implies that the $\beta$-fair discrete $k$-median clustering problem is $84$-approximated by a $(1-1/(8k))$-approximate local optimum. That is, $\delta=1/(8k)$ in Definition~\ref{def-approx-local}, and $\alpha=84$ in Definition~\ref{def-local-approx}. Furthermore, the parameters $p$ and $q$ in Definition~\ref{def-approx-local} are both equal to 4 here. By applying Lemma~\ref{lemma-general-condition}, we can prove Theorem~\ref{theo-median-fair}, showing that the GSEMO can achieve a $(84,7)$-bicriteria approximation ratio, i.e., the output Boolean-vector solution $\bm{x}$ by the GSEMO satisfies that $\sum_{\bm{v}_i \in \mathcal{D}} d(\bm{v}_i,\bm{x}) \leq 84 \cdot \mathrm{OPT}$, and $\forall \bm{v}_i\in \mathcal{D}: d(\bm{v}_i,\bm{x}) \leq 7\beta \cdot r(\bm{v}_i)$. The proof is similar to that of Theorem~\ref{theo-median}, except that after finding a solution with size $k$ and before achieving the desired approximation ratio, it needs another phase to make the solution feasible with respect to the set $\mathcal{B}$ of critical balls. The required expected number of iterations is polynomial in $n$ and $\log (\max_{i}d^{\max}_{i}/\min_{i}d^{\min}_{i})$.

\begin{theorem}\label{theo-median-fair}
For $\beta$-fair discrete $k$-median clustering in Definition~\ref{def-median-fair}, the expected number of iterations of the GSEMO using Eq.~(\ref{eq-bi-median-fair}), until achieving a $(84,7)$-bicriteria approximation ratio, is at most $O\left(k^2n+k^2n^8\log \frac{\sum^n_{i=k+1} d^{\max}_{(i)}}{\sum^{n-k}_{i=1} d^{\min}_{(i)}}\right)$, where $d^{\max}_{(1)},\ldots,d^{\max}_{(n)}$ and $d^{\min}_{(1)},\ldots,d^{\min}_{(n)}$ are permutations of $d^{\max}_{1},\ldots,d^{\max}_{n}$ and $d^{\min}_{1},\ldots,d^{\min}_{n}$ in ascending order, respectively, $d^{\max}_i=\max \{d(\bm{v}_i,\bm{u}) \mid \bm{u} \in \mathcal{D}\}$, and $d^{\min}_i=\min \{d(\bm{v}_i,\bm{u}) \mid \bm{u} \in \mathcal{D}\setminus \{\bm{v}_i\}\}$.
\end{theorem}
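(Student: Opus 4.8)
The plan is to mirror the two-phase analysis of Theorem~\ref{theo-median}, but to insert a middle phase that drives the size-$k$ solution into feasibility with respect to the set $\mathcal{B}$ of critical balls. Concretely, I would split the run into three phases: phase~(1) starts from $\bm{0}$ and ends once a solution of size $k$ appears; phase~(2) starts from such a solution and ends once the best size-$k$ solution in $P$ covers every critical ball, i.e.\ $\forall B\in\mathcal{B}: |B\cap\bm{x}|\geq 1$; phase~(3) runs local search on feasible size-$k$ solutions until the $84$-approximation of Lemma~\ref{lemma-median-fair-2} is reached. The bicriteria guarantee then follows immediately: the output is a feasible size-$k$ solution, so Lemma~\ref{lemma-median-fair-1} gives $(7\beta)$-fairness, while Lemma~\ref{lemma-median-fair-2} gives $\sum_{\bm{v}_i\in\mathcal{D}}d(\bm{v}_i,\bm{x})\leq 84\cdot\mathrm{OPT}$.

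Phases~(1) and~(3) are essentially the two phases of Theorem~\ref{theo-median} with $F=\mathcal{D}$ (so the vector length is $n$ and $|C|\to n$) and with $p=q=4$. For phase~(1) I would reuse the $J_{\max}$ argument verbatim to obtain $O(k^2n)$ expected iterations. For phase~(3) I would apply Lemma~\ref{lemma-general-condition} with $\delta=1/(8k)$, $\alpha=84$, $\mathrm{Pr}_{\mathrm{sel}}=1/k$ (since $|P|\leq k$), and $\mathrm{Pr}_{\mathrm{rep}}(4,4)\geq 1/(en^8)$ from flipping at most $4$ specific $1$-bits and $4$ specific $0$-bits; this yields $O(k^2 n^8\log(f_{\mathrm{init}}/\mathrm{OPT}))$. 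The only refinement is the bound on $f_{\mathrm{init}}$: because $F=\mathcal{D}$, a size-$k$ solution $\bm{x}$ has $d(\bm{v}_i,\bm{x})=0$ for its $k$ chosen points, so $f_{\mathrm{init}}=\sum_{\bm{v}_i\in\mathcal{D}\setminus\bm{x}}d(\bm{v}_i,\bm{x})\leq\sum_{i=k+1}^{n}d^{\max}_{(i)}$, which together with $\mathrm{OPT}\geq\sum_{i=1}^{n-k}d^{\min}_{(i)}$ (as in Theorem~\ref{theo-median}) gives the stated logarithmic factor.

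The substance is phase~(2), and this is where I expect the main difficulty. The key is the penalty in $f_1$ of Eq.~(\ref{eq-bi-median-fair}): since $\sum_{\bm{v}_i\in\mathcal{D}}d(\bm{v}_i,\bm{x})<\sum_i d^{\max}_i$ for any size-$k$ solution, reducing the number $\nu$ of violated balls by one strictly increases $f_1$, so such an offspring dominates the current best size-$k$ solution and replaces it; the same comparison shows $\nu$ can never increase and that, once $\nu=0$ is attained, feasibility is preserved throughout phase~(3). To lower-bound the probability of a $\nu$-reducing step I would use two structural facts. First, property~2 of Definition~\ref{def-critical-ball} implies the critical balls are pairwise disjoint (else two centers would lie within $2\beta\max\{r(\bm{c}^*_i),r(\bm{c}^*_j)\}$ of each other, contradicting the $6\beta$ separation), so each violated ball offers at least $n/k$ insertion candidates and the $q-\nu$ covered balls are each held by a distinct center. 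Second, this leaves at least $k-(q-\nu)\geq\nu$ \emph{free} centers whose removal uncovers no ball. A swap inserting a point of some violated ball and deleting a free center therefore has at least $(\nu\cdot n/k)\cdot\nu$ valid bit-pairs, giving probability at least $\nu^2/(ek^2n)$ per iteration. Summing the expected waiting times $ek^2n/\nu^2$ over $\nu$ from $q\leq k$ down to $1$ yields $\sum_{\nu\geq 1}ek^2n/\nu^2=O(k^2n)$, so phases~(1) and~(2) together cost $O(k^2n)$ and the total matches the claimed bound.

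The main obstacle is thus confined to phase~(2) and is twofold: verifying that the penalty weight $\sum_i d^{\max}_i$ is large enough that covering one more ball always strictly dominates (so that $\nu$ is monotone and feasibility, once reached, is never lost during the local-search phase), and squeezing out the right probability bound --- exploiting both the $\geq n/k$ points inside each critical ball and the $\geq\nu$ free centers --- to keep the feasibility phase within $O(k^2n)$ rather than a looser $O(k^3 n)$ or $O(k^2 n^2)$.
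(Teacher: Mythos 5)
Your proposal is correct and follows essentially the same route as the paper's own proof: the identical three-phase decomposition, the same violation-count argument for phase~(2) (disjointness of critical balls giving $\geq \nu n/k$ insertion candidates and $\geq k-(q-\nu)$ deletable centers, yielding probability $\Omega(\nu^2/(k^2n))$ and an $O(k^2n)$ sum via $\sum_\nu 1/\nu^2$), and the same application of Lemma~\ref{lemma-general-condition} in phase~(3) with $\delta=1/(8k)$, $\mathrm{Pr}_{\mathrm{sel}}=1/k$, $\mathrm{Pr}_{\mathrm{rep}}(4,4)\geq 1/(en^8)$, including the refined bound $f_{\mathrm{init}}\leq\sum_{i=k+1}^{n}d^{\max}_{(i)}$ exploiting $F=\mathcal{D}$. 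The two points you flag as the main obstacles (monotonicity of the violation count under the penalty weight, and the probability bound for feasibility-restoring swaps) are resolved in the paper exactly as you sketch them.
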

\begin{proof}
We divide the optimization process into three phases: (1)~starts from the initial solution $\bm{0}$ and finishes after finding a solution with size $k$; (2)~starts after phase~(1) and finishes after finding a solution with size $k$ which is feasible with respect to the set $\mathcal{B}$ of critical balls; (3)~starts after phase~(2) and finishes after achieving the desired $(84,7)$-bicriteria approximation ratio. The analysis of phase~(1) can be accomplished as same as that in the proof of Theorem~\ref{theo-median}, except that $|F|=|\mathcal{D}|=n$ here. Thus, the GSEMO needs at most 
\begin{align}\label{eq-median-fair-10}
ekn\ln \frac{n}{n-k}
\end{align}
expected number of iterations to find a solution with size $k$. Note that $k$ is assumed to be smaller than $n$, which obviously holds in practice. 

In phase~(2), let $\hat{\bm{x}}$ denote the solution with size $k$ in the population $P$. The population $P$ will always contain a solution with size $k$ after phase~(1), since it has the largest $f_2$ value $k$ and can be weakly dominated by only other solutions with size $k$. We use $J_{\mathrm{vio}}$ to denote the violation degree of $\hat{\bm{x}}$ with respect to Eq.~(\ref{eq-median-fair-3}), i.e., 
$$J_{\mathrm{vio}}=\sum\nolimits_{B\in \mathcal{B}}\mathbb{I}(B \cap \hat{\bm{x}} = \emptyset).
$$
$J_{\mathrm{vio}}$ is at most $q$, because $\mathcal{B}$ contains $q$ critical balls, where $q\leq k$. When $J_{\mathrm{vio}}=0$, it implies that $\forall B \in \mathcal{B}: B \cap \hat{\bm{x}}\neq \emptyset$, i.e., Eq.~(\ref{eq-median-fair-3}) holds; thus, $\hat{\bm{x}}$ is now feasible with respect to $\mathcal{B}$, i.e., the goal of phase~(2) is reached. It is clear that $J_{\mathrm{vio}}$ cannot increase, because a solution with larger violation degree has a larger value of Eq.~(\ref{eq-median-fair-2}), i.e., a smaller $f_1$ value. Assume that currently $J_{\mathrm{vio}}=i \leq q$. This implies that $\hat{\bm{x}}$ does not intersect with $i$ critical balls. Let $R$ denote the set of points contained by these $i$ critical balls. Adding one point in $R$ into $\hat{\bm{x}}$ will decrease $J_{\mathrm{vio}}$ by 1. By the definition of critical balls in Definition~\ref{def-critical-ball}, we know that the critical balls are disjoint, and each critical ball contains at least $n/k$ points. Thus, $|R|\geq in/k$. Because $\hat{\bm{x}}$ currently intersects with $q-i$ critical balls, there must exist a subset $S$ of $\hat{\bm{x}}$ with $|S|\geq k-(q-i)$, such that deleting one point in $S$ from $\hat{\bm{x}}$ will not increase $J_{\mathrm{vio}}$. Therefore, by selecting the solution $\hat{\bm{x}}$ in line~3 of Algorithm~1; flipping one of the 1-bits corresponding to $S$ (i.e., deleting one point in $S$ from $\hat{\bm{x}}$) and one of the 0-bits corresponding to $R$ (i.e., adding one point in $R$ into $\hat{\bm{x}}$) while keeping the other bits unchanged in line~4 (which performs bit-wise mutation), an offspring solution $\bm{x}'$ with size $k$ will be generated, satisfying $$\sum\nolimits_{B\in \mathcal{B}}\mathbb{I}(B \cap \bm{x}' = \emptyset)=i-1.$$
Compared with $\hat{\bm{x}}$, $\bm{x}'$ has a smaller value of Eq.~(\ref{eq-median-fair-2}), and thus a larger value of $f_1$, implying that $\bm{x}'$ dominates $\hat{\bm{x}}$. Thus, $\bm{x}'$ will be added into the population $P$ and replace $\hat{\bm{x}}$, implying that $J_{\mathrm{vio}}$ is decreased by 1. Then, we analyze the probability of the above selection and mutation behavior. Due to uniform selection, the probability of selecting the solution $\hat{\bm{x}}$ in line~3 of Algorithm~1 is $1/|P| \geq 1/k$, where the inequality holds by the population size $|P| \leq k$ which can be derived as in the proof of Theorem~\ref{theo-median}. The probability of mutation is $(|S|/n) \cdot (|R|/n)\cdot (1-1/n)^{n-2}$, where the first term is the probability of flipping one of the 1-bits corresponding to $S$, the second term is the probability of flipping one of the 0-bits corresponding to $R$, and the last one is the probability of keeping the remaining $n-2$ bits unchanged. Thus, the probability of decreasing $J_{\mathrm{vio}}$ in one iteration of the GSEMO is at least $(1/k)\cdot (|S|/n) \cdot (|R|/n)\cdot (1-1/n)^{n-2}\geq i(k-q+i)/(ek^2n)$, where the inequality is by $|S| \geq k-(q-i)$ and $|R| \geq in/k$. That is, 
$$\mathrm{Pr}[J_{\mathrm{vio}} \;\text{decreases in one iteration} \mid J_{\mathrm{vio}}=i]\geq \frac{i(k-q+i)}{ek^2n}.$$
Because $J_{\mathrm{vio}} \leq q$, the expected number of iterations of phase~(2) (i.e., to make $J_{\mathrm{vio}}=0$) is at most
\begin{align}\label{eq-median-fair-9}
\sum^{q}_{i=1} \frac{ek^2n}{i(k-q+i)} \leq \sum^{k}_{i=1} \frac{ek^2n}{i^2} \leq 2ek^2n,
\end{align}
where the first inequality holds by $q \leq k$, and the last inequality holds by $\sum^{k}_{i=1}1/i^2\leq 1+\sum^k_{i=2}(1/(i-1)-1/i)\leq 2$.

In phase~(3), the population $P$ will always contain a feasible solution of size $k$, with respect to the set $\mathcal{B}$ of critical balls, because a non-feasible solution with respect to $\mathcal{B}$ has a larger value of Eq.~(\ref{eq-median-fair-2}) and thus a smaller value of $f_1$. Note that for a feasible solution $\bm{x}$ of size $k$, $f_1(\bm{x})$ just equals to $-\sum_{\bm{v}_i \in \mathcal{D}} d(\bm{v}_i,\bm{x})$, since $\sum_{B\in \mathcal{B}}\mathbb{I}(B \cap \bm{x} = \emptyset)=0$. The analysis of this phase is similar to that of phase~(2) in the proof of Theorem~\ref{theo-median}. The $\beta$-fair discrete $k$-median clustering problem is $84$-approximated by a $(1-1/(8k))$-approximate local optimum. Similar to the analysis of Eq.~(\ref{eq-median-7}) in the proof of Theorem~\ref{theo-median}, we can use Lemma~\ref{lemma-general-condition} to get that until generating a $(1-1/(8k))$-approximate local optimum, the expected number of iterations is at most
\begin{align}
&\frac{1}{1-(1-\mathrm{Pr}_{\mathrm{sel}} \cdot \mathrm{Pr}_{\mathrm{rep}}(p,q))^{\lambda}} \cdot   O\left(\frac{1}{\delta} \log  \frac{f_{\mathrm{init}}}{\mathrm{OPT}}\right)\nonumber\\
&=ekn^8\cdot  O\left(k\log \frac{\sum^n_{i=k+1} d^{\max}_{(i)}}{\sum^{n-k}_{i=1} d^{\min}_{(i)}}\right),\label{eq-median-fair-8}
\end{align} 
where the equality holds by $\mathrm{Pr}_{\mathrm{sel}}=1/k$, $p=q=4$, $\mathrm{Pr}_{\mathrm{rep}}(p,q)=1/(en^{8})$, $\lambda=1$, $\delta=1/(8k)$, $f_{\mathrm{init}} \leq \sum^n_{i=k+1} d^{\max}_{(i)}$, and $\mathrm{OPT} \geq \sum^{n-k}_{i=1} d^{\min}_{(i)}$. Note that due to $F=\mathcal{D}$ here, we have used a tighter upper bound $\sum^n_{i=k+1} d^{\max}_{(i)}$ for $f_{\mathrm{init}}$, compared with $\sum^n_{i=1} d^{\max}_{i}$ in the proof of Theorem~\ref{theo-median}. For any feasible solution $\bm{x}$ of size $k$, we have
\begin{align*}
\sum_{\bm{v}_i \in \mathcal{D}} d(\bm{v}_i,\bm{x})=\sum_{\bm{v}_i \in \mathcal{D}\setminus \bm{x}} \!\!d(\bm{v}_i,\bm{x})\leq \!\sum_{\bm{v}_i \in \mathcal{D}\setminus \bm{x}} \!\!d^{\max}_{i} \leq \sum^n_{i=k+1} \!d^{\max}_{(i)},
\end{align*}
where the equality holds because $d(\bm{v}_i,\bm{x})=\min\{d(\bm{v}_i,\bm{u})\mid \bm{u} \in \bm{x}\}$ is the distance between $\bm{v_i}$ and its closest point in $\bm{x}$, the first inequality holds by $d(\bm{v}_i,\bm{x})=\min \{d(\bm{v}_i,\bm{u}) \mid \bm{u} \in \bm{x}\} \leq \max \{d(\bm{v}_i,\bm{u}) \mid \bm{u} \in \mathcal{D}\}=d^{\max}_{i}$ due to $\bm{x} \subseteq \mathcal{D}$, and 
the last inequality holds because $|\mathcal{D}\setminus \bm{x}|=n-k$ and $d^{\max}_{(1)},d^{\max}_{(2)},\ldots,d^{\max}_{(n)}$ is a permutation of $d^{\max}_{1},d^{\max}_{2},\ldots,d^{\max}_{n}$ in ascending order. By Lemma~\ref{lemma-median-fair-2}, a $(1-1/(8k))$-approximate local optimum $\bm{x}$ satisfies 
\begin{align}\label{eq-median-fair-6}
\sum\nolimits_{\bm{v}_i \in \mathcal{D}} d(\bm{v}_i,\bm{x}) \leq 84\cdot \mathrm{OPT}.\end{align}
Furthermore, since $\bm{x}$ is feasible with respect to $\mathcal{B}$, we know from Lemma~\ref{lemma-median-fair-1} that 
\begin{align}\label{eq-median-fair-7}
\forall \bm{v}_i\in \mathcal{D}: d(\bm{v}_i,\bm{x}) \leq 7\beta \cdot r(\bm{v}_i).
\end{align}
Eqs.~(\ref{eq-median-fair-6}) and~(\ref{eq-median-fair-7}) imply that the desired $(84,7)$-bicriteria approximation ratio is reached. Thus, Eq.~(\ref{eq-median-fair-8}) gives an upper bound on the expected number of iterations of phase~(3).

By combining the expected number of iterations (i.e., Eqs.~(\ref{eq-median-fair-10}),~(\ref{eq-median-fair-9}) and~(\ref{eq-median-fair-8})) in the above three phases, the GSEMO requires at most
\begin{align*}
&ekn\ln\frac{n}{n-k}+2ek^2n+ekn^8\cdot O\left(k\log \frac{\sum^n_{i=k+1} d^{\max}_{(i)}}{\sum^{n-k}_{i=1} d^{\min}_{(i)}}\right)\\
&=O\left(k^2n+k^2n^8\log \frac{\sum^n_{i=k+1} d^{\max}_{(i)}}{\sum^{n-k}_{i=1} d^{\min}_{(i)}}\right)
\end{align*}
iterations in expectation to achieve a bicriteria approximation ratio of $(84,7)$. Thus, the theorem holds.
\end{proof}

\section{Conclusion}\label{sec-conclusion}

Clustering is an important application of EAs. Previous results are all empirical, while this paper provides theoretical justification for evolutionary clustering by proving the approximation guarantees of the GSEMO (a simple MOEA) for solving four formulations of $k$-clustering, i.e., $k$-\emph{t}MM, $k$-center, discrete $k$-median and $k$-means. We also show that the performance of evolutionary clustering can be theoretically grounded even when considering fairness, by proving the bi-criteria approximation guarantee of the GSEMO for solving discrete $k$-median clustering under individual fairness. Note that we have only derived upper bounds on the approximation ratio. The tightness of these bounds is still open, which is worth studying in the future. 

Though having achieved theoretically guaranteed performance, the GSEMO is indeed very simple, which selects a parent solution from the population uniformly at random, uses bit-wise mutation only to generate an offspring solution, and keeps non-dominated solutions generated so far. Thus, an interesting future work is to study whether better approximation guarantees can be achieved by considering advanced components of EAs, especially noting that some practical MOEAs (e.g., MOEA/D~\cite{li2015primary,huang2021runtime}, NSGA-II~\cite{zheng2021first,bian2022better}, NSGA-III~\cite{doerr2022mathematical}, and SMS-EMOA~\cite{bian2023stochastic}) as well as the effectiveness of some advanced components (e.g., diversity-based parent selection~\cite{osuna2020design}, balanced crossover~\cite{friedrich2022crossover}, heavy-tailed mutation~\cite{doerr2017fast}, and non-elitist survivor selection~\cite{dang2021escaping}) have been theoretically analyzed recently. For example, the heavy-tailed mutation operator introduced by Doerr et al.~\cite{doerr2017fast} does not use the fixed mutation rate $1/n$, but employs the mutation rate $c/n$, where $c$ is chosen randomly according to a heavy-tailed distribution. Such a mutation operator makes the number of flipped bits not strongly concentrated around its mean, and eases having jumps of all sizes in the search space. Thus, it may help the GSEMO jump out of local optima, and lead to better performance.

Note that the theoretical analysis in this paper focuses on the expected number of iterations until achieving a desired approximation ratio. Another perspective is to analyze the expected approximation ratio after running a fixed number of iterations, which is called fixed-budget analysis and very useful for practitioners~\cite{lengler2015fixed,doerr2013method,jansen2020analysing}. Thus, performing fixed-budget analysis for evolutionary clustering will be an interesting future work.

It is also interesting to theoretically analyze evolutionary clustering under more complicated situations, e.g., with outliers~\cite{gupta2017local,bhaskara2019greedy}. As EAs have been successfully applied to solve various machine learning problems~\cite{zhou2019evolutionary}, it is expected to provide theoretical justification for more applications, e.g., evolutionary policy search in reinforcement learning~\cite{such2017deep,yang2023reducing} and evolutionary neural architecture search~\cite{real2017large,lv2022analysis}.

\section*{Acknowledgments}

This work was supported by the National Science Foundation of China (62022039, 62276124). Chao Qian is the corresponding author.

\bibliography{tec-evolutionary-clustering}
\bibliographystyle{IEEEtranS}

\begin{IEEEbiography}
[{\includegraphics[width=1in,height=1.25in,clip,keepaspectratio]{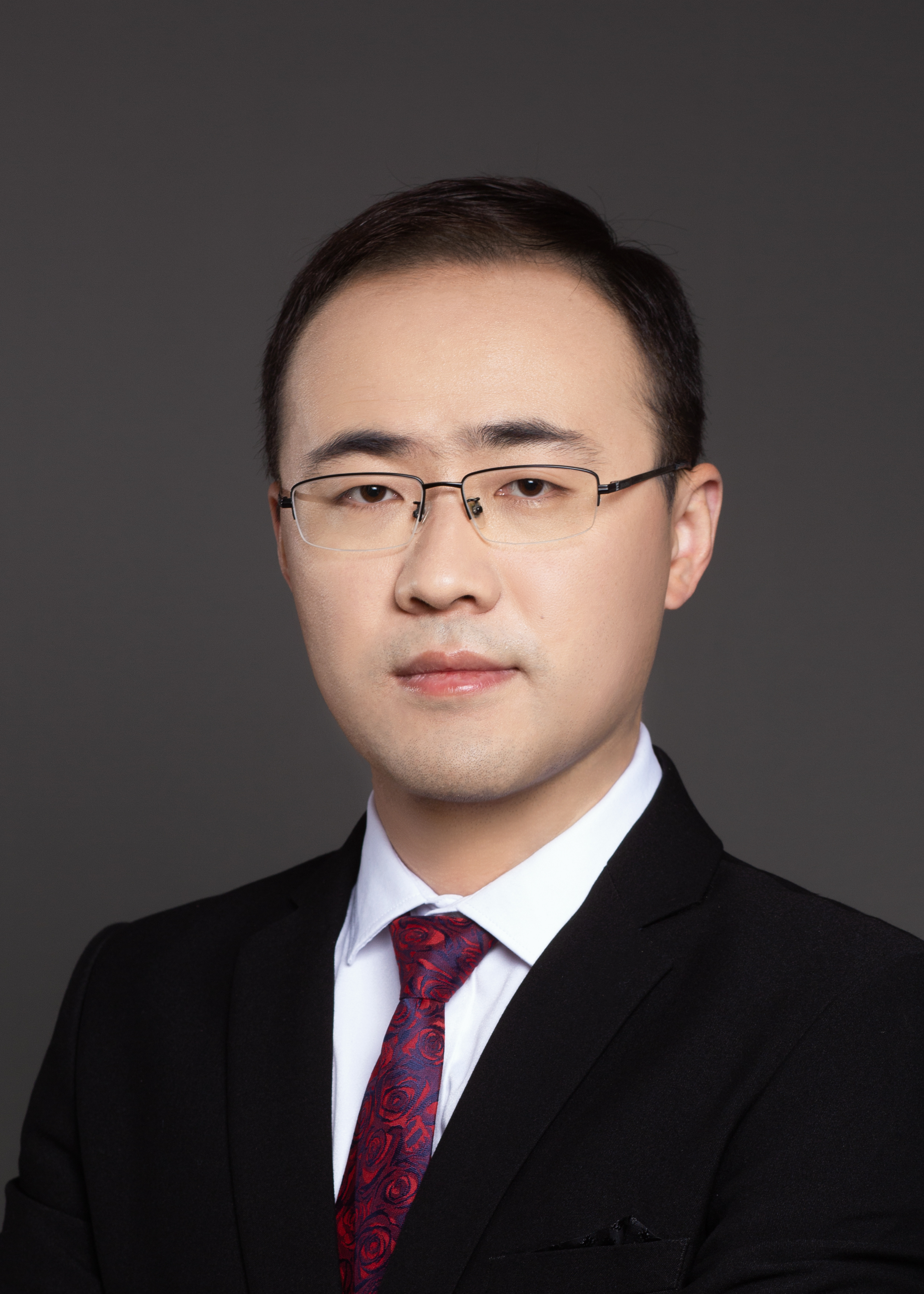}}]{Chao Qian} is an Associate Professor in the School of Artificial Intelligence, Nanjing University, China. He received the BSc and PhD degrees in the Department of Computer Science and Technology from Nanjing University. After finishing his PhD in 2015, he became an Associate Researcher in the School of Computer Science and Technology, University of Science and Technology of China, until 2019, when he returned to Nanjing University.

His research interests are mainly theoretical analysis of evolutionary algorithms (EAs), design of safe and efficient EAs, and evolutionary learning. He has published one book ``Evolutionary Learning: Advances in Theories and Algorithms", and over 40 papers in top-tier journals (AIJ, ECJ, TEvC, Algorithmica, TCS) and conferences (AAAI, IJCAI, NeurIPS, ICLR). He has won the ACM GECCO 2011 Best Theory Paper Award, the IDEAL 2016 Best Paper Award, and the IEEE CEC 2021 Best Student Paper Award Nomination. He is an associate editor of IEEE Transactions on Evolutionary Computation, a young associate editor of Science China Information Sciences, an editorial board member of the Memetic Computing journal, and was a guest editor of Theoretical Computer Science. He is a member of IEEE Computational Intelligence Society (CIS) Evolutionary Computation Technical Committee, and was the chair of IEEE CIS Task Force on Theoretical Foundations of Bio-inspired Computation. He has regularly given tutorials and co-chaired special sessions at leading evolutionary computation conferences (CEC, GECCO, PPSN), and has been invited to give an Early Career Spotlight Talk ``Towards Theoretically Grounded Evolutionary Learning" at IJCAI 2022.
\end{IEEEbiography}

\end{document}